\def\eqref#1{equation~\ref{#1}}
\def\1{\bm{1}}
\def\vr{{\bm{r}}}
\def\mI{{\bm{I}}}
\def\mW{{\bm{W}}}
\DeclareMathAlphabet{\mathsfit}{\encodingdefault}{\sfdefault}{m}{sl}
\SetMathAlphabet{\mathsfit}{bold}{\encodingdefault}{\sfdefault}{bx}{n}
\def\sZ{{\mathbb{Z}}}
\newcommand{\R}{\mathbb{R}}
\newtheorem{theorem}{Theorem}[section]
\newtheorem{corollary}{Corollary}[section]
\newtheorem{lemma}{Lemma}[section]
\newtheorem{definition}{Definition}[section]
\newtheorem{property}{Property}[section]
\newtheorem{assumption}{Assumption}[section]
\newcommand{\norm}[1]{\left\lVert#1\right\rVert}
\title{Continuity-Preserving Convolutional \\ Autoencoders for  Learning Continuous \\ Latent Dynamical Models from Images}
\author{Aiqing Zhu, Yuting Pan, Qianxiao Li\thanks{Corresponding author: \texttt{qianxiao@nus.edu.sg}}\\
% Department of Mathematics\\
% National University of Singapore\\
% \texttt{zaq@nus.edu.sg}
% \And
% Yuting Pan\\
% School of Computing\\
% National University of Singapore\\
% \texttt{pan.yuting@u.nus.edu}
% \And
% Qianxiao Li\thanks{Corresponding author}\\
% Department of Mathematics\\
% Institute for Functional Intelligent Materials\\
National University of Singapore\\
}
\begin{document}

\maketitle
% We propose CpAE, a novel autoencoder designed to learn latent states that evolve continuously over time for learnting continuous latent dynamical models from discrete image frames.
\begin{abstract}
Continuous dynamical systems are cornerstones of many scientific and engineering disciplines.
While machine learning offers powerful tools to model these systems
from trajectory data, challenges arise when these trajectories are captured
as images, resulting in pixel-level observations that are discrete in nature.
Consequently, a naive application of a convolutional autoencoder can result in
latent coordinates that are discontinuous in time.
To resolve this, we propose continuity-preserving convolutional autoencoders (CpAEs) to learn continuous latent states
and their corresponding continuous latent dynamical models from discrete image frames.
We present a mathematical formulation for learning dynamics from image frames,
which illustrates issues with previous approaches and motivates our methodology
based on promoting the continuity of convolution filters, thereby preserving the
continuity of the latent states.
This approach enables CpAEs to produce latent states that evolve continuously with the underlying dynamics,
leading to more accurate latent dynamical models.
Extensive experiments across various scenarios demonstrate the effectiveness of CpAEs.

\end{abstract}
\section{Introduction}
Continuous dynamical systems, described by differential equations, are widely used as scientific modeling tools across various biological, physical, and chemical processes.
While traditionally described by mathematical models, the increasing availability of data has spurred the development of data-driven approaches \citep{brunton2016discovering, brunton2022data, schmidt2009distilling}.
In particular, machine learning and neural networks have recently emerged as powerful tools, achieving remarkable success in tasks such as discovering \citep{chen2018neural, gonzalez1998identification, raissi2018multistep}, predicting \citep{wang2021bridging,wu2020data, xie2024ab}, and controlling \citep{brunton2022data, chen2023constructing, zhong2020symplectic} continuous dynamical systems based on observed data.

Most methods for modeling dynamical systems are designed for observed data that already correspond to relevant state variables. However, in many scientific and engineering applications, we only have access to measurements that yield a series of discrete image data \citep{botev2021priors,chen2022automated}.
When applied to image data, a common approach involves using autoencoders to encode natural images located in high-dimensional pixel space onto a low-dimensional manifold \citep{greydanus2019hamiltonian,jin2023learning, toth2020hamintonian}. It is then assumed that the encoded sequences follow continuous paths governed by a differential equation on this manifold, and machine learning methods automatically capture the
continuous dynamics \citep{botev2021priors, toth2020hamintonian}. While this approach and assumption have been substantiated on relatively simple tasks, complex visual patterns and dynamical behaviors remain challenging \citep{botev2021priors}.

The continuous evolution of dynamical systems over time is a fundamental characteristic of many fields. Machine learning methods for capturing continuous dynamics from discrete data along continuous trajectories have advanced significantly, with robust, general-purpose algorithms now readily available \citep{chen2018neural,krishnapriyan2023learning,ott2021resnet}.
However, image data pose a key challenge since pixel coordinates often do not align with the continuous evolution of underlying dynamics, as illustrated in \cref{fig:discrete_env}.
Whereas latent states that evolve continuously over time are essential for discovering continuous latent dynamical systems, standard autoencoders often struggle to learn such valuable latent representations, as will be discussed in detail later.

\begin{figure}[htbp]
\vskip -0.1cm
\centerline{\includegraphics[width=0.9\linewidth]{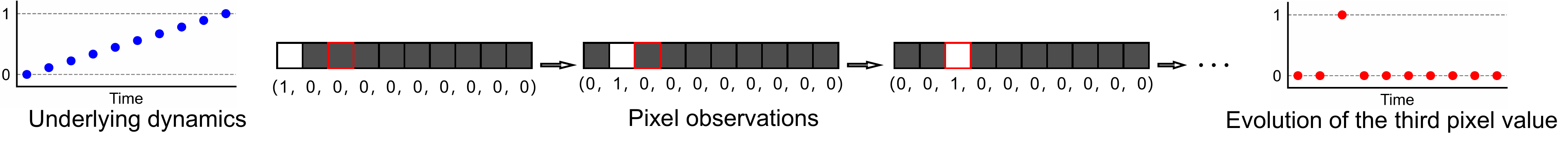}}
\vskip -0.3cm
\caption{Illustration of pixel observations of continuous motion.
A single pixel white square, initially located at the leftmost position, moves uniformly to the right against a black background (plotted in gray for clarity).
Its position is recorded at each pixel step. This translational motion results in pixel coordinates that first increase by one, then decrease by one, and finally remain constant. As an illustrative example, we show the evolution of the value at the third pixel position.
}
\label{fig:discrete_env}
\end{figure}

We introduce continuity-preserving convolutional autoencoders (CpAEs) to learn continuously evolving latent states from discrete image frames. Our contributions are summarized as follows:
\begin{itemize}
    \item We propose a mathematical formulation for learning continuous dynamics from image data to describe the continuity of latent states.
    % \item We delve into the reasons behind the limitations of standard CNN autoencoders.
    \item We establish a sufficient condition (\cref{theorem}), demonstrating that the latent states will evolve continuously with the underlying dynamics if the filters are Lipschitz continuous.
    \item We introduce a regularizer (\cref{eq:regu}) to promote the continuity of filters and, consequently, preserve the continuity of the latent states.
    \item We perform several experiments across various scenarios to verify the effectiveness of the proposed method.
\end{itemize}

\subsection{Related works}
\textbf{Deep Autoencoder}.
An autoencoder \citep{baldi2012autoencoders, ranzato2007unsupervised, rumelhart1986learning} is a type of neural network that encodes input data into a compressed and meaningful representation, then decodes it to reconstruct the original input. A significant advancement in this area has been the development of Variational Autoencoders (VAEs) \citep{kingma2014auto}, which extend traditional autoencoders by incorporating probabilistic modeling of the latent space. Autoencoders can be adapted and extended to various models, finding broad applications, including image generation and classification \citep{kingma2014auto, yunchen2016variational}, data clustering \citep{guo2017deep, song2013auto}, and anomaly detection \citep{gong2019memorizing, zong2018deep}. In this paper, we focus on using autoencoders to learn latent dynamical models from images, and we propose novel continuity-preserving autoencoders that incorporate continuity prior for this task.

\textbf{Discovering Dynamical model}.
Discovering dynamical models from observed time-series data ${x_0, x_1,\cdots, x_N}$ is a fundamental challenge in science. Numerous approaches have been proposed to infer underlying dynamical systems from such data. Machine learning has emerged as a powerful tool for this task. One effective strategy involves constructing a neural network model, denoted as $\mathcal{N}$, to learn a data-driven flow map that predicts the system's future states \citep{chen2022automated, chen2024learning, wang2021bridging,wu2020data}. This model predicts the subsequent state $x_{n+1}$ based on the current state $x_n$. Alternatively, some researchers focus on modeling the governing function of the unknown differential equation \citep{chen2018neural, gonzalez1998identification, raissi2018multistep}. Given an input $x_n$, $x_{n+1}$ is obtained by solving the NN-parameterized ODE at time $\Delta t$, starting from the initial condition $x_n$.  This approach can offer valuable insights into the system's dynamics. For instance, it enables the characterization of invariant distributions \citep{gu2023stationary, lin2023computing}, energy landscapes \citep{chen2023constructing}, and other essential properties \citep{qiu2022mapping}, thereby expanding the scope of scientific investigation.
% requires an ODE solver and typically introduces potential numerical errors \citep{du2022discovery, keller2021discovery, zhu2022on}, it

\textbf{Learning dynamics from image observations}.
Numerous studies have explored the incorporation of classical mechanics-inspired inductive biases, such as physical principles \citep{cranmer2020lagrangian, greydanus2019hamiltonian, michael2019deep, yu2021onsagernet, zhang2022gfinns}, geometry structures \citep{eldred2024lie, jin2020sympnets, zhu2022vpnets}, and symmetries \citep{huh2020time, yang2024symmetry}, into deep neural networks. While some of these models have been applied to learn dynamics from images using autoencoders, they have mostly been tested on relatively simple visual patterns and dynamical behaviors.
An enhanced approach \citep{botev2021priors, toth2020hamintonian} involves employing VAEs to embed images, often resulting in improved predictive and generative performance. Given our priority on ensuring the continuous evolution of the latent states, this work focuses on deterministic autoencoders.

% Determining the optimal number of state variables for high-dimensional image data poses a significant challenge. \cite{chen2022automated} comprehensively investigated this issue and demonstrated the effectiveness of their approach across various physical dynamical systems. They also highlight the potential of learned latent state variables, referred to as neural state variables, for robust long-term prediction. However, their work primarily focused on discrete models, and the learned neural state variables do not evolve continuously. In this paper, we build upon their findings by selecting a subset of their datasets and configuring the dimensions of our latent states accordingly.

\section{Learning latent dynamical models using autoencoders}\label{sec:2}

This paper focuses on learning continuous latent dynamical models from sequential image observations of an unknown system. Our dataset consists of discrete image frames sampled from multiple continuous trajectories, denoted as:
\begin{equation}\label{eq:data}
\{(X^1_0, X^1_1, \cdots, X^1_N ),\cdots, (X^M_0, X^M_1, \cdots, X^M_N)\},\ X_n^m \in \R^{(I+1)\times (I+1)},
\end{equation}
where the superscript indicates the $m$-th trajectory and the subscript indicates the $n$-th time step of the trajectory.
We assume that an underlying dynamical system governs the observed image time series, with its governing equations defined as follows:
\begin{equation}\label{eq:ode}
\dot{z} = f(z), \quad z\in \mathcal{Z} \subset \R^D,
\end{equation}
where $\mathcal{Z}$ denotes the set of states associated with the image observations, and assume that there is a corresponding mapping $\mI$ from the physical state space to the pixel space.
Mathematically, the pixel observation at time $t_n$ is assumed to satisfy $X^m_n = \mI(z^m_n)$, where $z^m_n$ is the state evolved from initial state $z^m_0$ according to \cref{eq:ode}. The true governing function $f$, the underlying physical states $z^m_n$, and the mapping $\mI$ are all unknown. We further assume that the unknown governing function $f$ is Lipschitz continuous and bounded by $M_f$.

\vskip -0.3cm
\begin{SCfigure}[][h]
    \centering
\includegraphics[width=0.65\textwidth]{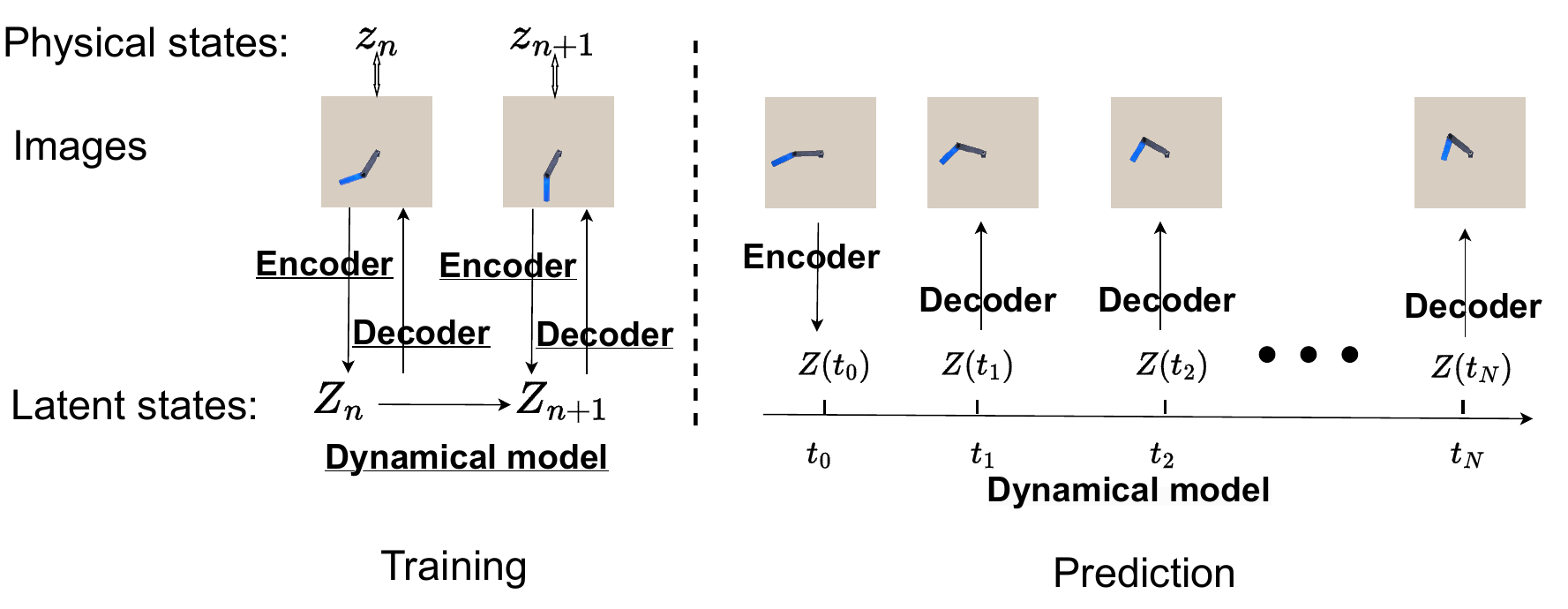}
    \caption{Schematic of learning latent dynamics. The encoder takes images as inputs and infers the corresponding latent states; the decoder maps the latent states back to reconstruct the original images; the dynamical model outputs the subsequent latent state $Z_{n+1}$ based on the current state $Z_{n}$.}
    \label{fig:e_d}
\end{SCfigure}

A typical framework for this task comprises three components: an encoder, a dynamical model, and a decoder. An illustration of this framework is shown in \cref{fig:e_d}.
The goals are to: 1) learn an encoder that extract latent states consistent with the assumed latent dynamical system, 2) discover a dynamical model that accurately captures the underlying latent dynamics, and 3) identify a decoder capable of reconstructing the pixel observations.

The first objective is a prerequisite for the entire task; without it, no target dynamical system with assumed structures can be identified. Specifically, to learn a meaningful continuous latent dynamical model, it is essential that the extracted latent states are discrete samples of Lipschitz continuous trajectories. However, without constraints on the encoder, the extracted latent states may deviate from the assumed latent dynamical system. In this paper, we focus on preserving the fundamental continuity of the latent dynamical system.
The encoder is carefully designed to ensure the continuous evolution of latent variables over time in this paper.

% Existing works \citep{greydanus2019hamiltonian,jin2023learning} typically train latent dynamical models incorporating specific priors and standard autoencoders simultaneously. However, without a prior on the encoder, the learned latent states may not exhibit the properties of the underlying physical states, implying that there is no target dynamical system that can be discovered.
% However, without a prior on the encoder, the learned latent states may not exhibit the properties of the underlying physical states, implying that there is no target dynamical system that can be discovered.
% In particular, a necessary condition for learning a meaningful continuous latent dynamical model is that the extracted latent states are discrete samples of Lipschitz continuous trajectories.

% Given the inherent continuity of underlying dynamics and the growing demand for continuous models in scientific research \citep{chen2023constructing,krishnapriyan2023learning, qiu2022mapping}, we focus on preserving the fundamental continuity of the latent dynamical system.
% The encoder is carefully designed to ensure the continuous evolution of latent variables over time in this paper.

\section{Continuity-Preserving Autoencoder}

\subsection{Mathematical formulation}

Given the discrete nature of image data in both space (pixels) and time, traditional definitions of continuity  are not directly applicable. In this section, we propose a mathematical formulation to describe the continuity of latent states when learning continuous dynamics from images.

\begin{figure}[!tbp]
\vskip -0.3cm
\centerline{\includegraphics[width=\linewidth]{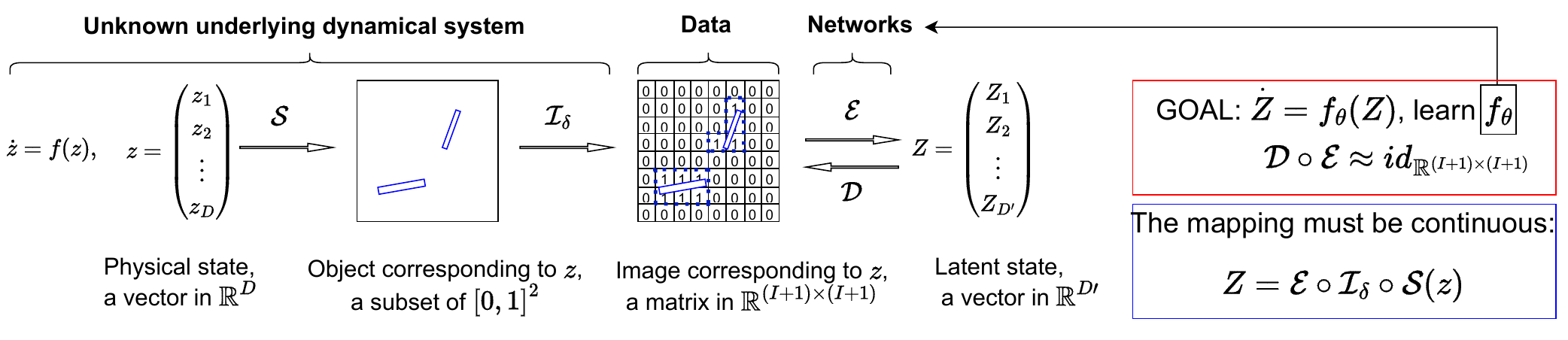}}
\vskip -0.35cm
\caption{Illustration of the mathematical formulation. The continuous dynamics of the system are captured in pixel form. We learn latent dynamical models by encoding this recorded pixel data.}
\label{fig:math}
\vskip -0.3cm
\end{figure}
The formal mathematical definition of projecting the states of a dynamical system into images can be expressed in two steps:
\begin{itemize}
    \item \textbf{Mapping a state to a set of positions of all the particles constituting the objects}: This mapping is represented as $\mathcal{S}(z): \R^D \rightarrow \mathbf{B}([0,1]^2)$, where $\mathbf{B}([0,1]^2)$ denotes the set of all Borel sets in $[0,1]^2$, $\mathcal{S}(z)=:\Omega$ is the set of positions of particles constituting the objects.
    \item \textbf{Discretizing the coordinate space into image signals, simplifying the images by setting background pixels to 0 and pixels containing objects to 1}: Denote $\delta$ as the pixel size and $(I+1)\delta = 1$. Then we define the functional $\mI_{\delta}:\mathbf{B}([0,1]^2) \rightarrow \R^{(I +1)\times (I+1)}$ as follows:
    \begin{equation*}
    [\mI_{\delta}(\Omega)]_{i_1, i_2} =
    \left\{\begin{aligned}
    &1, &&\text{ if } [i_1\delta,(i_1+1)\delta]\times [i_2\delta,(i_2+1)\delta] \cap \Omega \neq \emptyset,\\
    & 0 &&\text{ otherwise, }
    \end{aligned} \right.
    \end{equation*}
where $\Omega$ is a Borel set in $[0,1]^2$, $[\mI_{\delta}(\Omega)]_{i_1, i_2}$ denotes the element located at the $(I+1-i_2)$-th row and $(i_1 + 1)$-th column of the matrix $\mI_{\delta}(\Omega) \in \mathbb{R}^{(I+1)\times (I+1)}$.
\end{itemize}
Finally, we define the observed image data for any state $z$ as $\mathcal{I}_{\delta} \circ \mathcal{S}(z)$. See \cref{fig:math} for an illustration.

To reduce dimensionality, CNN typically employ a parameter called \textit{stride} to progressively downsample feature maps.
It is well-known that the convolution operation for two-dimensional input images can be expressed as follows
\citep{goodfellow2016deep}:
\begin{equation}\label{eq:main_scnn}
\begin{aligned}
&\text{Input:} &&\mI_0\in \R^{(I+1)\times(I+1)},\\
&\text{Hidden layers:} &&
[\mI_{l}]_{i_1, i_2} = \sum_{j_1=0}^{J_{l}} \sum_{j_2=0}^{J_{l}} [\mI_{l-1}]_{s_l\cdot i_1+j_1, s_l\cdot i_2+j_2} \cdot [\mW_{l}]_{j_1, j_2},\ l=1,\cdots, L,\\
&\text{Output:} &&\mathcal{E}(\mI_0) = \mI_L.
\end{aligned}
\end{equation}
Here $s_l$ is the stride of the  $l$-th layer; $\mI_l \in \R^{({\left\lfloor {I}/{\prod_{i=1}^l s_{i}}\right\rfloor}+1)\times ({\left\lfloor {I}/{\prod_{i=1}^l s_{i}}\right\rfloor}+1)}$ is the output feature map of the $l$-th layer, and we assume that $\norm{\mI_l}=\mathcal{O}(1)$;
$\mW_l \in \R^{(J_l+1)\times(J_l+1)}$ is the filter of $l$-th layer.
For simplicity, we have omitted operations that do not significantly affect continuity, such as activation layers. Additionally, to simplify index counting, we assume that zero padding is applied only on one side of the feature maps, meaning that $[\mI_{l}]_{i_1, i_2} = 0$ if $i_1 \lor i_2 \in \lfloor {I}/{\prod_{i=1}^l s_{i}}\rfloor+ \{1,2, \cdots, J_l\}$.

Different resolutions $\delta=1/(I+1)$ correspond to specific networks, where hyper-parameters such as $J_l$ should be adjusted accordingly. Therefore, we will also denote the feature map as $\mI_l^{\delta}$, the filter as $\mW^{\delta}_l$ and the CNN as $\mathcal{E}_{\delta}$ to emphasize their dependence on the pixel size.
For the convenience of analysis, we assume the weights $\mW^{\delta}_{l}$ of a standard CNN filter can be normalized to a bounded function $\mathcal{W}_l:\R^2 \rightarrow [-1,1]$ that is independent of $\delta$:
\begin{equation*}
[\mW^{\delta}_{l}]_{j_1, j_2} = \mathcal{W}_l(j_1\delta, j_2\delta) \varepsilon_l,\quad\text{where }
\mathcal{W}_l(x)=0  \text{ if } x \notin [0, J_l\delta]^2.
\end{equation*}
Here $\varepsilon_l$ is a normalization coefficient to ensure that $\norm{\mI_l}=\mathcal{O}(1)$ and the last equation ensures that the filter function outputs 0 when the index exceeds the defined size limits.

Now we are ready to provide the following definition of continuity, which serves as a relaxation of the traditional Lipschitz continuity.
\begin{definition}\label{def:con}
A sequence of functions $\{g_{\delta}(z): \mathcal{Z} \rightarrow \mathbb{R}^{d}| \delta\in \{1/(I+1)\}_{I=1}^{\infty}\}$ is called $\delta$-continuous if there exists a constant $c_g$ such that for all $z_1, z_2 \in \mathcal{Z}$, there exists a $\delta^*$ such that if $\delta \leq \delta^*$, then $\|g_{\delta}(z_1) - g_{\delta}(z_2)\| \leq c_{g} \|z_1-z_2\|$.
\end{definition}

% \begin{definition}\label{def:con1}
% A sequence of functions $\{g_{\delta}(z): \mathcal{Z} \rightarrow \mathbb{R}^{d}| \delta\in \R\}$ is called discontinuous if for all constant $c_g$, there exist $z_1, z_2 \in \mathcal{Z}$ such that for all $\delta>0$, there exists a $\delta^* \leq \delta$ such that
% $\|g_{\delta^*}(z_1) - g_{\delta^*}(z_2)\| > c_{g} \|z_1-z_2\|$.
% \end{definition}

It is worth mentioning that piece-wise constant approximation $g_{\delta}$ of Lipschitz function $g$ with partition size $\delta$ is $\delta$-continuous. Moreover, if $\lim_{\delta\rightarrow 0}g_{\delta}=g$, the $\delta$-continuity of $g_{\delta}$ is equivalent to the Lipschitz continuity of $g$.
We adopt this definition due to the discrete nature of pixel observations, where slight variations in $z$ might not be reflected in image, as will be illustrated in \cref{sec:Modeling motion}.

As discussed previously, to learn a continuous latent dynamical model, it is essential that the extracted latent states $Z = \mathcal{E}_{\delta} \circ \mI_{\delta} \circ \mathcal{S}(z(t))$ evolve continuously over time. Therefore, our objectives for the autoencoder are as follows:
\begin{itemize}
    \item Find the encoders $\{\mathcal{E}_{\delta}\}$ such that $\mathcal{E}_{\delta} \circ \mI_{\delta} \circ \mathcal{S}(z)$ is $\delta$-continuous.
    \item Find the decoders $\{\mathcal{D}_{\delta}\}$ such that $\mathcal{D}_{\delta}\circ \mathcal{E}_{\delta} $ is approximately the identity mapping.
\end{itemize}
The second objective guarantees that the learned latent states are non-trivial, which is a standard requirement for autoencoders.
The first objective ensures that the latent states evolve continuously with the underlying dynamics. If it can be achieved, with an additional assumption that the pixel size $\delta^*$ of the image is sufficiently small,
then there exists a constant $c_{\mathcal{E}}$ such that
\begin{equation*}
\norm{Z_n - Z_{n+1}} = \norm{\mathcal{E}_{\delta^*} \circ \mI_{\delta^*} \circ \mathcal{S}(z_n) - \mathcal{E}_{\delta^*} \circ \mI_{\delta^*} \circ \mathcal{S}(z_{n+1})}\leq c_{\mathcal{E}} \norm{z_n-z_{n+1}}\leq c_{\mathcal{E}} M_f \Delta t.
\end{equation*}
This inequality implies that the resulting latent variables $\{Z_n^m\}_{n=0,\cdots, N, \ m=1,\cdots, M}$, corresponding to the input images (\ref{eq:data}), are discrete samplings of Lipschitz continuous trajectories. This
property allows us to employ a continuous dynamical model for learning the latent dynamics and predicting future behavior by decoding the predicted latent states using the decoder.
In the following sections, we analyze why standard CNN encoders may fail to achieve the first objective and how our proposed method overcomes this limitation.

\subsection{Modeling of motion}\label{sec:Modeling motion}

To illustrate the rationale behind the modified definition of $\delta$-continuity introduced in the previous section, we consider the example of rigid body motion in a two-dimensional plane. This type of motion, involving only translation and rotation, allows for a clear representation of the mapping $\mathcal{S}$, which is essential for the subsequent analysis.

The equation of rigid body motion on a two-dimensional plane can be expressed as follows
\begin{equation}\label{eq:ode2}
\begin{aligned}
\dot{z} = f(z),\ z= (z^t, z^r),\  z^t= (\vr_1, \cdots, \vr_{K}),\ z^r=(\theta_1, \cdots, \theta_{K}),
\end{aligned}
\end{equation}
where $K$ is the number of rigid bodies. The image corresponding to the state $z$ is given by
\begin{equation}\label{eq:image_of_state}
\mathcal{S}(z)  =  \cup_{k=1}^{K} \Phi_{\theta_k}(\Omega_k)+\vr_k, \ \Omega_k\subset \R^2,
\end{equation}
where $\vr_k=(r_{k,1}, r_{k,2})$ and $\Phi_{\theta_k}$ represent the translation and rotation of the object, respectively. Details of this model can be found in Appendix \ref{app:rigid}. Here we assume that $\Phi_{\theta_k}(\Omega_k)+\vr_k\subset [0,1]^2$ for $k=1,\cdots, K$ and that these sets are pairwise disjoint for all $z\in \mathcal{Z}$.

\begin{figure}[!tbp]
\centerline{\includegraphics[width=\linewidth]{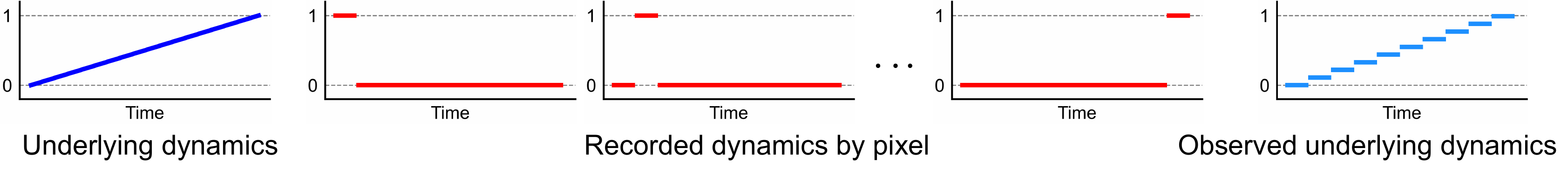}}
\vskip -0.3cm
\caption{Illustration of discrete nature of pixel observations for continuous motion.
Considering a motion similar to that depicted in Figure \ref{fig:discrete_env}, we assume that the object occupies a very small volume and its motion is recorded in continuous time periods.
The left side illustrates the underlying motion of the object, while the middle side shows the evolution of pixel values during the recording process. The right side depicts the observed motion derived from these pixel values, reflecting the discrete nature of pixel observations.
}
\label{fig:discrete_env2}
\vskip -0.3cm
\end{figure}

Suppose $K=1$ and the motion only involves translation, we take $\Omega_1= (0, w_{1}]\times (0, w_{2}]$ and further assume that $w_{i}/ \delta$ is integer. In this case, we have:
\begin{equation*}
 \mI_{\delta}
(\mathcal{S}(z) )=\mI_{\delta}
(\mathcal{S}(\hat{z})),\ \text {where } \hat{z} =
(\hat{r}_{1,1}, \hat{r}_{1,2}) \text{ and }
\hat{r}_{1,i} = \max_{j\in \sZ, j\delta \leq r_{1,i}} j\delta.
\end{equation*}
This implies that variations smaller than $\delta$ may not be captured in the image, and thus, we can only track the dynamics of $\hat{z}$, the piece-wise constant approximation of $z$, as illustrated in \cref{fig:discrete_env2}. Consequently, the function $\mathcal{E}_{\delta} \circ \mI_{\delta} \circ \mathcal{S}(z)$ cannot be continuous under standard definitions of continuity.
Using our definition, we can readily verify that (1) $g_{\delta}(z) = \hat{z}$ is $\delta$-continuous; (2) $g_{\delta}(z)=\mI_{\delta} \circ \mathcal{S}(z)$ is not $\delta$-continuous; and (3) $\hat{g}\circ g_{\delta}:\mathcal{Z} \rightarrow \R^{d}$ is $\delta$-continuous if $g_{\delta}:\mathcal{Z} \rightarrow \R^{\hat{d}}$ is $\delta$-continuous and $\hat{g}:\R^{\hat{d}} \rightarrow \R^{d}$ is Lipschitz continuous.

\subsection{Why standard CNN autoencoders fail}\label{sec:Why CNN fiails}
Since $\mI_{\delta} \circ \mathcal{S}(z)$ is not $\delta$-continuous, the composition $\mathcal{E} \circ \mI_{\delta} \circ \mathcal{S}(z)$ is generally not $\delta$-continuous if no further restrictions is imposed on $\mathcal{E}$. Next we illustrate this issue by an example of uniform motion.

Consider a vertical bar with a width of $\Delta$ undergoing uniform horizontal motion within the image, which is a two-dimensional extension of the motion depicted in \cref{fig:discrete_env}. The equation governing this motion is $\dot{z}=1,\ z(0)=0$. We suppose $t\in [0,1/2]$ to ensure that the bar remains within the image. Let $\mathcal{S}(z) = (z, z+\Delta]\times (0,1]$ and assume $\Delta /\delta$ is an integer. The image of $z=n\delta$ is of the form
\begin{equation*}
\mI_{\delta} \circ \mathcal{S}(n\delta)=(\bm{0}_{(I+1) \times n},\ \bm{1}_{(I+1) \times (\Delta/\delta+1)}, \bm{0}_{(I+1) \times (I-n-\Delta/\delta)}),
% ,\quad \mI_{\delta} \circ \mathcal{S}(z) =  \mI_{\delta} \circ \mathcal{S}(\hat{z}),
\end{equation*}
where $\bm{x}_{i_1 \times i_2}$ denotes the $i_1 \times i_2$ matrix whose elements are all $x$.
We can verify that if $\mI_0 = \mI_{\delta} \circ \mathcal{S}(n\delta)$, then $[\mI_1]_{0, 0} = \sum_{j_1=n}^{\Delta/\delta}\sum_{j_2=0}^{J_1}[\mW^{\delta}_1]_{j_1,j_2}:= g_{\delta}(n\delta)$.

Suppose the step size for the motion is set to $2\Delta$.
We consider the two cases:

1) the CNN filter size is of constant size i.e. $J_1 = \mathcal{O}(1)$, then we have $g_{\delta}(2\Delta) = 0$ and $g(0) = B_0 := \sum_{j_1=0}^{J_l}\sum_{j_2=0}^{J_1}\mathcal{W}_l(j_1\delta, j_2\delta)$ as $\delta \rightarrow 0$, where $B_0\neq 0$ otherwise $\mI_1$ is all zero. Then for any constant $c_{\mI_1}>0$, if $2\Delta \leq |B_0|/c_{\mI_1}$, $|g_{\delta}(2\Delta)-g(0)| > c_{\mI_1}|2\Delta|$. This indicates that $[\mI_{0,0}]$ is not $\delta$-continuous, implying that the extracted latent states cannot be learned as a continuous dynamics.

2) the CNN filter size increases as image resolution increases, i.e. $J_1 = \mathcal{O}(1/\delta)$, then we have
\begin{equation*}
g_{\delta}(2\Delta)-g_{\delta}(0) = \sum_{j_1=2\Delta/\delta}^{3\Delta/\delta}\sum_{j_2=0}^{J_1}\mathcal{W}_1(j_1\delta, j_2\delta) \varepsilon_1 - \sum_{j_1=0}^{\Delta/\delta}\sum_{j_2=0}^{J_1}\mathcal{W}_1(j_1\delta, j_2\delta) \varepsilon_1
, \ \varepsilon_1=\frac{\delta}{J_1\Delta}.
\end{equation*}
We assume $\mathcal{W}_1(j_1\delta, j_2\delta)$ are i.i.d. samples from a uniform distribution on $[-1,1]$. Then both $g_{\delta}(2\Delta)$ and $g(0)$ are also independent samples from a uniform distribution on $[-1,1]$. It follows that $|g_{\delta}(2\Delta)-g_{\delta}(0)|\leq c_g \norm{\Delta}$ for a given $c_g$ with probability zero as $\Delta\rightarrow 0$.
This implies that if  no further restrictions is imposed on standard CNN filters, $[\mI_{0,0}]$ is not $\delta$-continuous and the extracted latent states do not exhibit continuous dynamics with probability one.

\begin{figure}[!tbp]
\vskip -0.3cm
\centerline{\includegraphics[width=\linewidth]{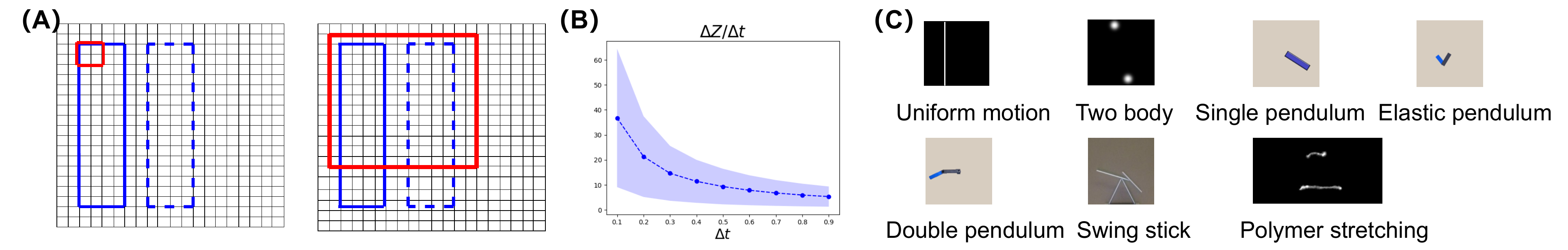}}
\vskip -0.3cm
\caption{\textbf{(A)} Illustration of convolution operation. The red boxes represent the filter of size $\mathcal{O}(1)$ or $\mathcal{O}(1/\delta)$. The blue box represents the object. The solid line indicates its initial position, while the dashed line represents its position after motion. \textbf{(B)} The variation of latent states divided by $\Delta t$ for the two-body system, where the encoder is a one-layer CNN with parameters uniformly sampled from $[-1,1]$. The shaded region represents one standard deviation. \textbf{(C)} Examples of motion where the positions of the objects after variation only partially overlap with their positions before variation.
}
\vskip -0.3cm
\label{fig:example}
\end{figure}

This calculation is schematically illustrated in the left panel of \cref{fig:example}.
It is noted that in scenarios involving small-volume objects and limited overlap in their positions between steps, a standard CNN without additional constraints performs poorly in outputting continuous latent states.
A numerical demonstration of this limitation for two-body systems is shown in the middle panel of \cref{fig:example}. Several examples of motions where this failure may occur are shown on the right side of \cref{fig:example}.

\subsection{Quantifying continuity of CNN autoencoders}

In the aforementioned counterexample, it is evident that a sufficient condition for ensuring the continuous evolution of latent states is that $\mathcal{W}_1$ is Lipschitz continuous. In this section, we will quantify the continuity of CNN autoencoders rigorously and extend this analysis to more general scenarios.

% We begin by making the following assumption:
\begin{assumption}\label{ass:eva1}
There exists a positive constant $M_{\Delta}$ and an integer $L^*< L$ such that, if $(i_1, i_2)/{\lfloor {I}/{\prod_{i=1}^l s_{i}}\rfloor} \notin \left[M_{\Delta}, 1- M_{\Delta}\right]^2$, then $[\mI_l]_{i_1, i_2} = 0$ for $l=1,\cdots, L^*-1$.
\end{assumption}
% This assumption is equivalent to stating that the outputs of the first few layers vanishes near the image boundary, and can be achieved by padding the input image with a sufficient number of zeros.

This assumption holds in scenarios where the objects of interest are well-captured and located within the central region of the image. Alternatively, it can be satisfied by padding the input image with a sufficient number of zeros.

With these preliminaries, we next present the main theorem and provide its proof in \cref{sec:proofs}.
\begin{theorem}\label{theorem}
Assume that the underlying dynamical system is a rigid body motion (\ref{eq:ode2}) on a two-dimensional plane. If Assumption \ref{ass:eva1} hold, let $c_{\mathcal{W}}$ be constants satisfying
\begin{equation*}
\max_{l=1,\cdots, L^*} |\mathcal{W}_{l}(y_1) - \mathcal{W}_{l}(y_2)| \leq c_{\mathcal{W}}\norm{y_1-y_2},
\end{equation*}
and if $s_l=2$ for $l=1, \cdots,L^*-1$, then for any $z_1=(z_1^t, z_1^r), z_2=(z_2^t, z_2^r) \in \mathcal{Z}$, we have
\begin{equation*}
\norm{\mathcal{E}_{\delta}\circ \mI_{\delta} \circ \mathcal{S}( z_1) - \mathcal{E}_{\delta} \circ \mI_{\delta} \circ \mathcal{S}(z_2)}\leq Cc_{\mathcal{W}} \norm{z_1^r-z_2^r} + \frac{Cc_{\mathcal{W}}}{{2^{L^*-1}}}\norm{z_1^t-z_2^t},\ \text{ as }\ \delta\rightarrow 0.
\end{equation*}
Here $C$ is a constant independent of $\delta$ and $z$.
\end{theorem}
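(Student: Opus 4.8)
The plan is to bound the contributions from the rotational coordinates $z^r$ and the translational coordinates $z^t$ separately, introducing the intermediate state $\bar z = (z_2^t, z_1^r)$ so that $z_1 \to \bar z$ is a pure translation and $\bar z \to z_2$ a pure rotation, and then combining the two estimates by the triangle inequality. Since the sets $\Phi_{\theta_k}(\Omega_k)+\vr_k$ are pairwise disjoint throughout $\mathcal{Z}$, I would first reduce to a single rigid body and sum the resulting bounds over $k=1,\dots,K$. By \cref{ass:eva1} the feature maps $\mI_l$ vanish near the padding boundary for $l\le L^*-1$, so no truncation corrections enter the first $L^*-1$ layers, while the layers $l=L^*,\dots,L$ only compose maps that are Lipschitz with $\delta$-independent constants (thanks to the normalization $\norm{\mI_l}=\mathcal{O}(1)$). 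It therefore suffices to track the translational and rotational sensitivities through the first $L^*-1$ strided layers and absorb the tail into $C$.

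For the translation part the decisive step is the first layer, which must turn $\mI_\delta\circ\mathcal{S}$---which is \emph{not} $\delta$-continuous---into a $\delta$-continuous quantity. Writing the shifted image $\mI_\delta\circ\mathcal{S}(z^t+\vr)$ as a grid-shift of $\mI_\delta\circ\mathcal{S}(z^t)$ and reindexing the convolution sum, the difference $[\mI_1(z^t+\vr)]_{i}-[\mI_1(z^t)]_{i}$ is expressed, by summation by parts exactly as in the counterexample of \cref{sec:Why CNN fiails}, through differences $\mathcal{W}_1((j+m)\delta)-\mathcal{W}_1(j\delta)$, each bounded by $c_{\mathcal{W}}$ times the shift; this is precisely where the Lipschitz continuity of the filters converts the discontinuous indicator image into a controlled difference of order $c_{\mathcal{W}}\norm{\vr}$. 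The inductive step then propagates this through the remaining layers: because each $\mathcal{W}_l$ is Lipschitz, the convolved difference $\mathcal{W}_l * (\mI_{l-1}(z_1^t)-\mI_{l-1}(z_2^t))$ is spatially smooth on the pixel scale, so subsampling it by a factor $2$ in each of the two spatial dimensions reduces its $\normltwo$ norm by a factor close to $\tfrac12$, while Young's inequality (with $\varepsilon_l\norm{\mathcal{W}_l}_1=\mathcal{O}(1)$) keeps the remaining factor bounded. Since the first layer already supplies both the $c_{\mathcal{W}}$ conversion and one such contraction, multiplying the $L^*-1$ contractions yields the factor $2^{-(L^*-1)}$ in front of $\norm{z_1^t-z_2^t}$.

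For the rotation part I would bound the image difference geometrically: the symmetric difference of the images at angles $\theta_1$ and $\theta_2$ is a thin collar around the object boundary whose area is $\mathcal{O}(\norm{z_1^r-z_2^r})$, with the implied constant controlled by the bounded diameter and perimeter of $\Omega_k$. Propagating this through the Lipschitz-filter layers---again using summation by parts at the first layer to bring in $c_{\mathcal{W}}$---gives a bound of order $c_{\mathcal{W}}\norm{z_1^r-z_2^r}$. The essential difference from the translation case is that a rotation is not a global grid-shift but a position-dependent deformation whose amplitude is set by the object's spatial extent rather than by the grid resolution, so the stride-$2$ subsampling does not contract it and no $2^{-(L^*-1)}$ factor appears. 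Summing over the $K$ disjoint bodies and letting $\delta\to 0$ to discard the $o(1)$ discretization remainders then yields the stated inequality.

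The main obstacle I anticipate is making the per-layer contraction rigorous: one must quantify the smoothness-on-the-pixel-scale of the propagated difference and show that subsampling genuinely gains the factor $\tfrac12$ per stride-$2$ layer, uniformly as $\delta\to0$, while simultaneously carrying the filter-Lipschitz constant $c_{\mathcal{W}}$ introduced at the first layer. A secondary difficulty is the careful treatment of the nonsmooth indicator image $\mI_\delta\circ\mathcal{S}(z)$: fractional-pixel shifts and the discrete boundary must be handled so that the summation-by-parts estimates hold up to errors that vanish in the $\delta\to0$ limit, which is exactly where \cref{ass:eva1} and the rigid-body structure of $\mathcal{S}$ in \cref{eq:image_of_state} are used.
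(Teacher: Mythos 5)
Your overall skeleton matches the paper's: you split the motion into a pure translation plus a pure rotation, reduce to a single body by linearity/disjointness (the paper's \cref{lemma1} does exactly this), handle rotation by a change-of-variables argument whose cost is $c_{\mathcal{W}}\norm{z_1^r-z_2^r}$ with no depth gain (the paper's ``$L^*=1$'' case, using that the rigid motion is volume preserving and moves points by $\mathcal{O}(\norm{\Delta})$), and absorb the layers beyond $L^*$ into the constant $C$. However, the mechanism you propose for the factor $2^{-(L^*-1)}$ in the translation part does not work. You convert the whole shift into filter differences at the first layer (getting $c_{\mathcal{W}}\norm{\Delta^t}$ there) and then claim each subsequent stride-$2$ layer contracts the propagated difference by $\tfrac12$ because subsampling a pixel-scale-smooth function halves its $\ell^2$ norm. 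That halving is a pure counting effect: it applies equally to the feature map itself, and the normalization $\varepsilon_l$ is chosen precisely so that $\norm{\mI_l}=\mathcal{O}(1)$, so whatever factor subsampling removes from the difference it also removes from the signal and is restored by $\varepsilon_l$. Equivalently, in the sup-type norms the paper actually propagates (its quantities $a_l$ and $M_\phi$ in \cref{lem:dis1}), subsampling gives no contraction at all: the per-layer transfer constant is the normalized $\ell^1$ filter mass $M_\phi=\mathcal{O}(1)$, not $\tfrac12$. So your induction yields only $C c_{\mathcal{W}}\norm{z_1^t-z_2^t}$, without the $2^{-(L^*-1)}$ improvement that is the point of the theorem.

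The paper's proof gets the factor from a different place: it does \emph{not} pay the conversion cost at layer $1$. Writing the pixel shift as $\Delta^t=I\delta$, it decomposes $I$ recursively by residues, $i^{l-1}=i^{l}\bmod 2^{l-1}$ (see \cref{corollary}), so that the increment handled at layer $l$ is a multiple of that layer's input grid spacing $2^{l-1}\delta$. For such increments the stride-$2$ layer is \emph{exactly} translation equivariant (\cref{pro:cnn_kernel} and \cref{cor:deri}), so the shift is deferred, at no cost, down to layer $L^*$; only there is it converted into filter differences. The gain then comes from the filter geometry: layer $L^*$'s filter indices step over $2^{L^*-1}\delta$ in image-plane coordinates while $\mathcal{W}_{L^*}$ is $c_{\mathcal{W}}$-Lipschitz in index$\times\delta$ units, so its Lipschitz constant measured in the image plane is $c_{\mathcal{W}}/2^{L^*-1}$ --- this is the hypothesis $|\phi_l(y_1,y)-\phi_l(y_2,y)|\leq \frac{c_\phi}{\prod_{i<l}s_i}\norm{y_1-y_2}$ in \cref{theorem2}. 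The sub-grid residuals left at shallower layers each cost only $\mathcal{O}(c_{\mathcal{W}}\delta)$ and vanish in the stated $\delta\to 0$ limit. Without this deferral-via-equivariance construction (or some substitute for it), the $2^{-(L^*-1)}$ factor cannot be recovered, so your proposal as written proves a strictly weaker bound.
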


This theorem establishes a connection between the $\delta$-continuity of a CNN encoder and the continuity of its filters.
To ensure that the latent states evolve continuously with the underlying dynamics, it is sufficient for the functions $\mathcal{W}_l$ representing the filters in the first few layers to be Lipschitz.

\subsection{Method to preserve continuity of CNN autoencoders}
In this section, we discuss strategies to promote continuity of $\mathcal{W}_l$ , thereby ensuring that functions $\mathcal{W}_l,\ l=1,\cdots, L^*$ have a small constant $c_{\mathcal{W}}$ and $\mathcal{E}_{\delta}\circ \mI_{\delta} \circ \mathcal{S}$ is $\delta$-continuous.

Note the fact that $\max_l\max_{ j_1, j_2} |\mathcal{W}_l(j_1\delta, j_2\delta)|/ (\lceil J_l/2 \rceil \delta) \leq c_{\mathcal{W}}$, larger filters are necessary to ensure continuity.
It is worth mentioning that using a filter with the same size as the input image in CNN is essentially equivalent to using a fully connected neural network (FNN). While FNN encoders are commonly employed in baseline methods, they are often insufficient for second objective of reconstructing images with complex visual patterns, as demonstrated in \cref{app:fnn}.

Focusing on images of size $3 \times 128 \times 256$, the parameters of the downsampling layers within the encoder are detailed in \cref{table:parameter}.
The first three layers in this table use large filters (i.e., $L^*=4$). The remaining five layers are standard convolution layers used for extracting latent features.
\begin{table}[htbp]
\vskip -0.5cm
\caption{Architecture of the encoder in CpAE}
\begin{center}
\begin{tabular}{ l| c c c c c c c c}
\hline
{Layer} &  1& 2&  3&  4& 5&6 &7 &8
\\ \hline
Filter size     & 12  &12   &12   &4   &4   &4   &4   &(3,4)\\
Stride          & 2   &2    &2    &2   &2   &2   &2   &(1,2)\\
\hline
\end{tabular}
\end{center}
\label{table:parameter}
\vskip -0.5cm
\end{table}

As we only need the values of $\mathcal{W}_l$ on grids for computation, we recommend using the nonlocal operators method \citep{gilboa2007nonlocal, gilboa2009nonlocal},
an image processing technique that promotes image continuity. This approach requires only the following regularizer for the filters:
\begin{equation}\label{eq:regu}
\begin{aligned}
\mathcal{J}
=&\lambda_J\sum_{l=1}^{L} \
\sum_{i_1, i_2, j_1, j_2 =-\hat{J}}^{J^l+\hat{J}}  (W^{l}_{i_1, i_2} - W^{l}_{j_1, j_2})^2k\left((i_1\delta, i_2\delta),  (j_1\delta, j_2\delta)\right),
\end{aligned}
\end{equation}
where $\lambda_J$ is a weight hyperparameter, which is set to $1$ by default, $k$ is a positive and symmetric  function and the parameter $\hat{J}\geq 1$. Herein, we recommend employing the Gaussian kernel function $k(x) = e^{-\|x\|_2^2/\sigma^2}$ and setting $\hat{J} = 1$. And we apply the regularizer \cref{eq:regu} to the filters of the first three layers to penalize large $c_{\mathcal{W}}$ and ensure the filters have appropriate continuity.

\section{Experiments}
The benchmark methods used for comparison in this section include existing dynamical models such as Neural ODEs \citep{chen2018neural}, Hamiltonian Neural Networks (HNNs) \citep{greydanus2019hamiltonian}, and Symplectic Networks (SympNets) \citep{jin2020sympnets, jin2023learning}, coupled with standard autoencoders. Neural ODEs incorporate only the continuity prior. In contrast, HNNs and SympNets are structured dynamical models that leverage prior knowledge of Hamiltonian systems. With the goal of obtaining latent states that closely aligns with the assumed dynamics, these methods \citep{greydanus2019hamiltonian, jin2023learning} typically train the latent dynamical models and the autoencoders simultaneously by minimizing the following loss function:
\begin{equation*}
\begin{aligned}
\mathcal{L}
=& \lambda \sum\nolimits_{(x, y)\in \mathcal{T}} \left(\norm{\mathcal{D}\circ \mathcal{E}(x) - x}_2^2 + \norm{\mathcal{D}\circ \mathcal{E}(y) - y}_2^2\right)
+ \sum\nolimits_{(x, y)\in \mathcal{T}} \norm{\Phi \circ \mathcal{E}(x) - \mathcal{E}(y)}_2^2,
\end{aligned}
\end{equation*}
where $\mathcal{T} = \{(X^m_n, X^m_{n+1})\}_{n=0, 1, \cdots, N-1,\ m=1,\cdots, M}$ is the training dataset, $\Phi$ is the latent dynamical model detailed in \cref{app:model details}.

CpAEs are able to learn latent states that evolve continuously with time. Thus, we propose to learn the latent states and their corresponding latent dynamical models separately.
ODEs are not only continuous over time but also preserve orientation, as characterized by the positive determinant of the Jacobian for the phase flow. Therefore, we employ VPNets \citep{zhu2022vpnets}, which have the unit Jacobian determinant, for regularization. This regularization also helps penalize the large constant $C$ in \cref{theorem}.
The loss function for CpAEs is defined as follows:
\begin{equation*}
\begin{aligned}
\mathcal{L}_{CpAE} = &\sum\nolimits_{(x, y)\in \mathcal{T}} \norm{\mathcal{D}\circ \mathcal{E}(x) - x}_2^2 + \mathcal{J}_{R} + \mathcal{J},
\end{aligned}
\end{equation*}
where $\mathcal{J}_{R}$ is given by $\mathcal{J}_{R} = \lambda_R\sum_{(x, y)\in \mathcal{T}} \norm{\Phi_{vp} \circ \mathcal{E}(x) - \mathcal{E}(y)}_2^2 + \norm{\mathcal{D}\circ \Phi_{vp} \circ \mathcal{E}(x) - y}_2^2$ and $\Phi_{vp}$ is a small VpNet, $\lambda_R$ is set to $1$ by default. After training the CpAEs, we separately learn a continuous dynamical model for the latent dynamics. In this section, the latent model for CpAEs is chosen to be a Neural ODE.

We also compare our proposed method with the hybrid scheme of neural state variable \citep{chen2022automated}, a discrete model that has demonstrated impressive predictive accuracy.
Note that the datasets used in our experiments, except for the first one, are obtained from \cite{chen2022automated}. Following their preprocessing steps, we concatenate two consecutive frames to form the data points. Parameters of all methods can be found in \cref{app:model details}.

Assessing the fidelity of learned latent dynamical models remains an open challenge. The quality of background reconstruction can significantly impact the Pixel Mean Squared Error (PMSE). In contrast, if there is already no overlap, large deviations in object position may not significantly increase PMSE. In this paper, we adopt a similar definition as in \cite{jin2023learning, botev2021priors} to compute the Valid Prediction Time (VPT) for evaluating a model's predictive ability:
\begin{equation*}
\text{VPT}=\arg\max\nolimits_{t}\ \{t\leq T\ |\ \text{PMSE}(X_{\tau}, \bar{X}_{\tau})\leq \varepsilon,\, \forall {\tau} \leq t\},
\end{equation*}
where $\varepsilon$ is a threshold parameter, $X_{\tau}$ is the ground truth and $\bar{X}_{\tau}$ is the prediction at time ${\tau}$. Here we set $T=1$ and $\varepsilon=0.007$ for the first three datasets, and $\varepsilon=0.0015$ for the last dataset. We found that once these thresholds are exceeded, there is a significant deviation in the predicted images.
The VPT scores are averaged across all test trajectories. We also compute the Valid Prediction Frequency (VPF), which represents the frequency of  test trajectories for which VPT $= 1$.

\subsection{Continuity of latent states}
We demonstrate the continuity using a simple circular motion dataset comprising 220 images ($48 \times 48$) captured every 0.1 seconds along a single trajectory, with 70 images for training and 150 for testing. A single hidden layer $48 \times 48$ CNN autoencoder with various regularizers is used to learn the latent states, followed by a Neural ODE to model their dynamics. After training, we show the latent states of the test images and the predicted dynamics. More detail are given in \cref{app:continuity details}.

\begin{figure}[htbp]
\vskip -0.2cm
\centerline{\includegraphics[width=1\textwidth]{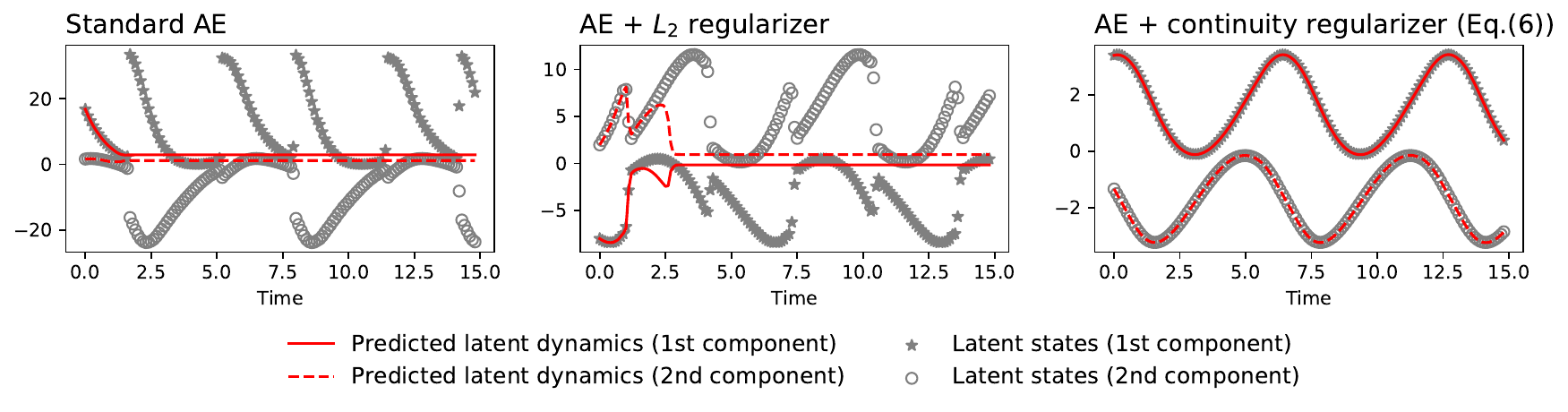}}
\vskip -0.5cm
\caption{The latent states and the corresponding learned dynamics derived from them
}
\vskip -0.2cm
\label{fig:continuity}
\end{figure}
As shown in \cref{fig:continuity}, neither the standard autoencoder nor the addition of a conventional \(L_2\) regularizer can extract continuously evolving latent states, leading to the failure of subsequent Neural ODE training. In contrast, the proposed continuity regularizer (\ref{eq:regu}) ensures continuous latent state evolution, enabling the Neural ODE to effectively capture their dynamics.

\begin{table}[!tbp]
\centering
\begin{threeparttable}

\resizebox{\textwidth}{!}{
\begin{tabular}{lcccccccccc}
\toprule%[1pt]
\multirow{2}{*}{\diagbox[]{Dataset}{Method}} &\multicolumn{2}{c}{CpAE} & \multicolumn{2}{c}{Hybrid scheme} & \multicolumn{2}{c}{AE +Neural ODE} &   \multicolumn{2}{c}{AE+HNN}&\multicolumn{2}{c}{AE+SympNet}\cr
\cmidrule(lr){2-3}
    \cmidrule(lr){4-5} \cmidrule(lr){6-7} \cmidrule(lr){8-9} \cmidrule(lr){10-11}
&VPT & VPF&VPT & VPF&VPT & VPF&VPT & VPF&VPT & VPF\\
    \midrule
Damped pendulum
& 99.2$\pm$8.5&99.2 & 95.4$\pm$15.0&88.3 & 50.7$\pm$31.2&23.3& ---& --- & ---& ---
\\
Elastic pendulum
& 72.1$\pm$27.2&36.7 & 49.5$\pm$24.2&10.0 & 30.6$\pm$18.5&1.7 & ---& ---& ---& ---
\\
Double pendulum
& 69.1$\pm$31.5&40.0 & 46.8$\pm$21.4&4.6 & 24.3$\pm$13.8&0.0 & 11.0$\pm$4.2&0.0 & 15.1$\pm$12.8&0.0
\\
Swing stick
& 57.4$\pm$20.4&11.1 & 13.7$\pm$5.1&0.0 & 14.4$\pm$7.5&0.0 & 24.1$\pm$14.5&0.0 & 14.8$\pm$12.2&0.0
\\
\bottomrule%[1pt]
\end{tabular}
}
\vskip -0.2cm
\caption{The performance of four physical systems evaluated using the VPT and VPF metrics. All values are scaled by a factor of $100$, with higher scores indicating better performance. VPT scores are reported as mean $\pm$ standard deviation. We do not report the performance of HNN and SympNet on the first two datasets, as their underlying systems are not Hamiltonian.}
\vskip -0.1cm
\label{tab: results}
\end{threeparttable}
\end{table}

\begin{figure}[htbp]
\vskip-0.2cm
\centerline{\includegraphics[width=0.9\textwidth]{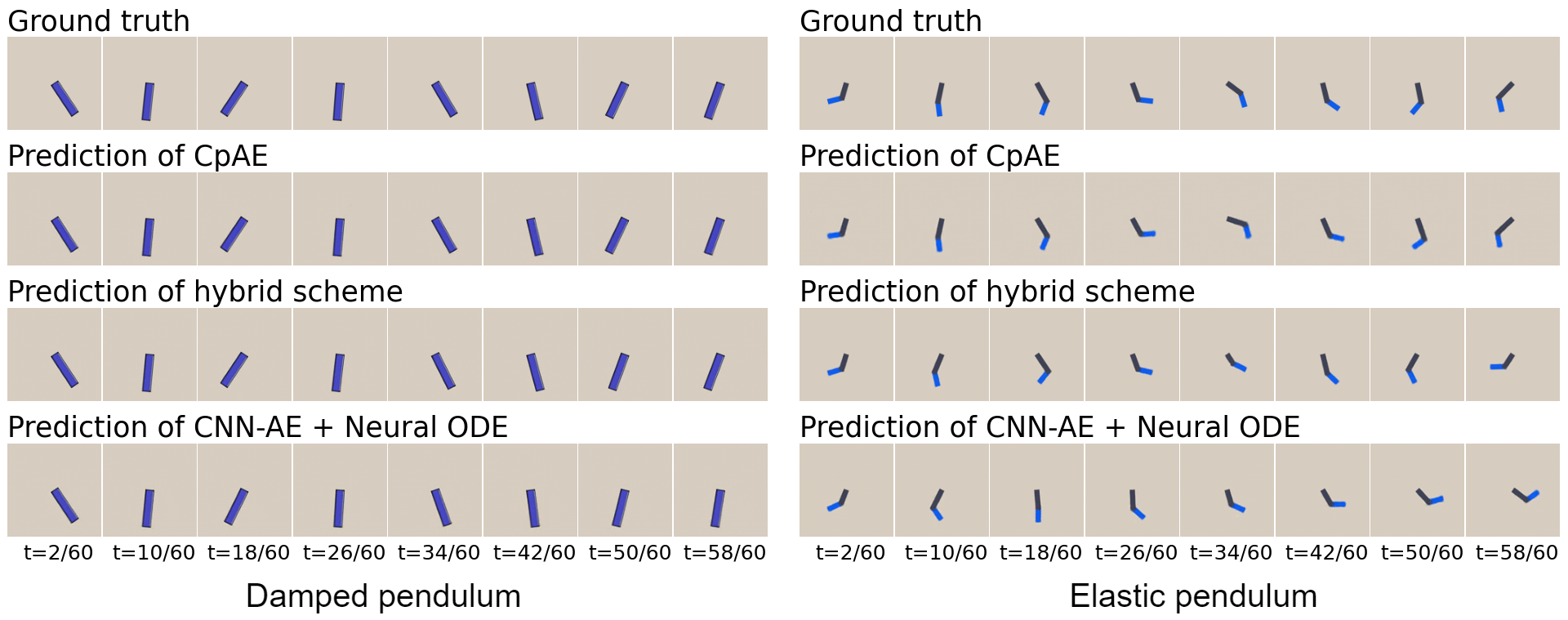}}
\vskip -0.3cm
\caption{Predictions for simulation data.
}
\vskip -0.4cm
\label{fig:simulate}
\end{figure}

\subsection{Simulation data}
We then benchmark on simulation datasets to show the enhanced prediction performance of CpAE.
% generated by numerically solving the ODEs and then creating the corresponding images using computer software.

% \textbf{Two-body problem}.
% This dataset is generated in the same manner as in \cite{jin2023learning} and is used to illustrate the FNN autoencoder.
% It consists of $100$ observations-images of size $100 \times 50$-captured at a time interval of $0.6$ seconds along a single trajectory.

\textbf{Damped pendulum}.
This dataset consists of 1,200 trajectories, each containing 60 discrete data points-images of size $3\times 128\times 128$-sampled at a time interval of $\frac{1}{60}$ seconds. The details of this system are provided in \cref{app:Details of datasets}. The images of the physical state maintain the form of a single pendulum and are are generated using a similar procedure as in \cite{chen2022automated}.

\textbf{Elastic pendulum}.
To verify the effectiveness of CpAEs on non-rigid motion, we consider the elastic double pendulum, where each pendulum arm can stretch and contract. The dataset is generated following the procedure outlined in \cite{chen2022automated}. It consists of 1,200 trajectories, each containing 60 data points-images of size $3\times 128\times 128$-sampled at time intervals of $\frac{1}{60}$ seconds.

% This dataset is generated in the same manner as in \cite{jin2023learning} and is used to illustrate the FNN autoencoder.
% It consists of $100$ observations-images of size $100 \times 50$-captured at a time interval of $0.6$ seconds along a single trajectory.
% We begin by examining the performance of FNN autoencoders.
% As shown in the left panel of \cref{fig:simulate}, all FNN-based methods, except HNN, accurately predict the ground truth within the given time interval for the two-body system data. However, this success does not extend to datasets with more complex visual patterns. The bottom right panel of \cref{fig:simulate}  illustrates that FNN autoencoders fail to achieve complete reconstruction for the damped pendulum and elastic double pendulum datasets.

\cref{tab: results} and \cref{fig:simulate} demonstrate that both the hybrid scheme with neural state variables and the proposed CpAEs accurately approximate the actual dynamics as they evolve. In contrast, standard CNN autoencoders exhibit lower predictive accuracy due to their tendency to produce discontinuous latent states (as illustrated in Appendix \ref{app:exp}), while using a continuous model to learn the latent dynamics.
Since hybrid schemes do not require continuous latent variables, the advantage of CpAEs is less pronounced. Nevertheless, CpAEs provide a distinct benefit by yielding a continuous latent model, which is crucial for many scientific applications \citep{chen2023constructing, krishnapriyan2023learning, qiu2022mapping}. This continuous latent model also facilitates tasks such as time reversal and interpolation between observed states (as shown in \cref{app:exp}).

\subsection{Real-world data}
To evaluate the model's performance on real-world systems, we perform experiments using the double pendulum and swing stick datasets from \citep{chen2022automated}. Both datasets consist of images of size $3\times 128\times 128$ recorded at 60 fps. The double pendulum dataset contains $1,200$ trajectories, each consisting of $60$ discrete image frames, while the swing stick dataset includes $85$ trajectories, each with $1,212$ discrete image frames.

\begin{figure}[!htbp]
\centerline{\includegraphics[width=\linewidth]{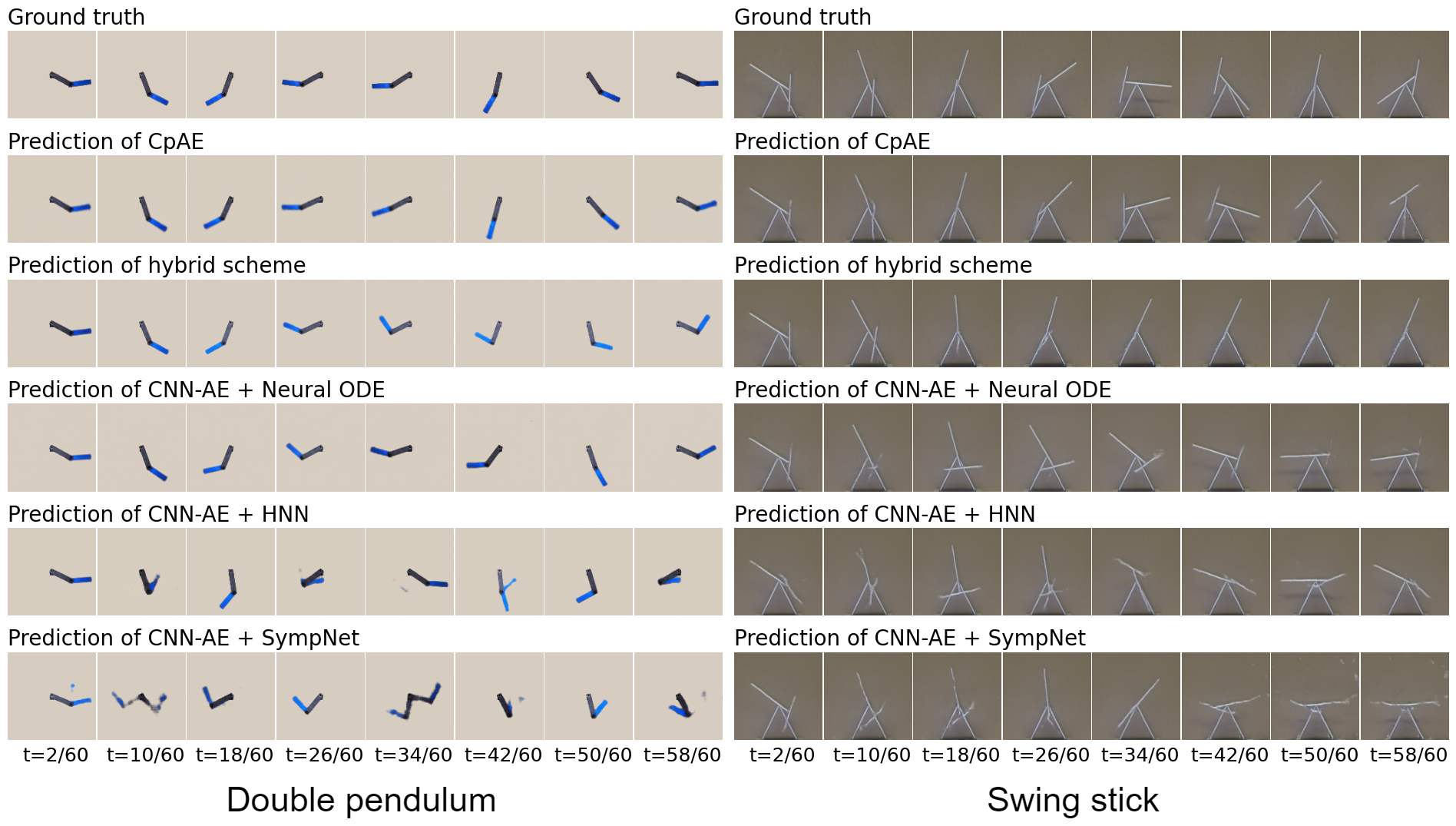}}
\vskip -0.3cm
\caption{Predictions for real-world data.
}
\vskip -0.4cm
\label{fig:real}
\end{figure}
Building on our success with simulated data, we further show that CpAEs outperform baseline methods on real-world data, as illustrated in \cref{tab: results} and \cref{fig:real}. Although the double pendulum is a Hamiltonian system, and SympNets are well-suited for such systems \citep{jin2020sympnets}, standard autoencoders often struggle to learn latent variables that adhere strictly to Hamiltonian constraints, leading to predictions that deviate substantially from the true dynamics. This issue is particularly pronounced for the double pendulum system.

% Building upon our success with simulated data, we further demonstrate that CpAEs outperform baseline methods when applied to real-world data, as shown in \cref{tab: results} and \cref{fig:real}.
% HNN and SympNet intrinsically ensure symplectic structures and stability. Although the double pendulum is a Hamiltonian system and SympNets are well-suited for such systems \citep{jin2020sympnets}, standard autoencoders often struggle to learn latent variables that strictly adhere to the Hamiltonian constraints, leading to predictions that diverge significantly from the true dynamics. In contrast, methods that incorporate weaker priors, such as Neural ODEs with only a continuity prior and the discrete hybrid scheme without any specific priors, perform better.

\section{Summary}
Leveraging known properties of a data domain to design efficient deep learning architectures has significantly advanced the field.
When using a dynamic model with prior knowledge to learn hidden dynamics from images, it is essential that the extracted latent states follow a dynamic system that aligns with the specified prior.
In this paper, we introduce continuity-preserving autoencoders (CpAEs), a novel approach designed to impose the continuity restriction on latent state representations. Our main contribution is the mathematical formulation for learning dynamics from image frames, which illustrates the issue of discontinuity of the latent states extracted by standard CNN encoders. We then show that the output latent states of CNN encoders evolve continuously with the underlying dynamics, provided that the filters are Lipschitz continuous.
Moreover, numerical experiments show that CpAEs outperform standard autoencoders.

In this paper, our continuity quantification is limited to rigid body motion in a two-dimensional plane. Generalizing our analysis to include non-rigid motion and the projection of three-dimensional motion onto a two-dimensional plane presents significant challenges, which we leave for future research.
While CpAEs show superior performance compared to baseline approaches, it is important to note that they still fall short of effectively solving all tasks. As the visual complexity of the images increases, particularly in the challenging swing-stick tasks where the dynamics are not fully characterized, and the images are captured against complex backgrounds with significant noise, we observe a decline in performance, highlighting the need for more sophisticated approaches.
Herein, we adopted the simplest method of promoting filter continuity by adding regularization. Future research could explore more effective methods, such as hypernetworks \citep{chauhan2023brief}, to achieve this goal.
Furthermore, our current research is exclusively focused on the weak prior of continuity within the context of CNN.
Whereas numerous studies have explored incorporating classical mechanics-inspired inductive biases into neural networks to construct dynamical models, a promising direction would be to develop autoencoders that explicitly impose other forms of prior knowledge.
Moreover,
we would like to investigate the effectiveness of advanced architectures, such as Vision Transformers \citep{alexey2021vit} and their variants \citep{sriwastawa2024vision}, as autoencoders for learning dynamics from images in future work. This approach holds promise because the operation applied to each patch in ViTs can be interpreted as a convolution with a kernel size equal to the patch size, and the mathematical formulation introduced is not restricted to CNNs.

% Another interesting research direction would be to

\subsubsection*{Acknowledgments}
This project is supported by the National Research Foundation, Singapore, under its AI Singapore Programme (AISG Award No:
AISG3-RP-2022-028) and the NRF fellowship (project No. NRF-NRFF13-2021-0005).
% Use unnumbered third level headings for the acknowledgments. All
% acknowledgments, including those to funding agencies, go at the end of the paper.

\bibliography{ref}
% \bibliography{iclr2025_conference}
\bibliographystyle{iclr2025_conference}

\clearpage
\appendix
\section{Appendix}
\allowdisplaybreaks

\subsection{Rigid motion modeling}\label{app:rigid}

Here we consider rigid body motion, namely that all its particles maintain the same distance relative to each other.
The position of the whole body can be represented by the imaginary translation and rotation that is needed to move the object from a reference placement to its current placement

We consider the motion of a rigid body, where all particles maintain a fixed distance relative to each other. The position of the entire body can be described by the combined translation and rotation required to move it from a reference placement to its current placement.

Let us first select a reference particle, denoted as
$A$, typically chosen to coincide with the body’s center of mass or centroid. When the rigid body is in its reference placement, the position vector of $A$ is denoted by $\vr_A^0$. For another particle $B$, its position vector can then be expressed as:
\begin{equation*}
\vr_B^0 = \vr_A^0 + \vr_B^0 - \vr_A^0 = \vr_A^0 + |\vr_B^0 - \vr_A^0| (\cos \theta_B^0, \sin \theta_B^0)^{\top},
\end{equation*}
where $|\vr_B^0 - \vr_A^0|$ denotes the distance between points $A$ and $B$, and $\theta_B^0$ represents the direction angle of the vector $\vr_B^0 - \vr_A^0$.

When the rigid body is in its current placement, we denote the position vector of $A$ by $\vr_A$.
Then the position vector of particle $B$ can be written as
\begin{equation*}
\vr_B = \vr_A + \vr_B - \vr_A = \vr_A + |\vr_B - \vr_A|(\cos \theta_B, \sin \theta_B)^{\top}.
\end{equation*}

Since the body is assumed to be rigid, the distance $|\vr_B - \vr_A| = |\vr_B^0 - \vr_A^0|$ remains invariant, and there exists an angle $\theta$ such that $\theta_B = \theta_B^0 + \theta$ for all particles $B$. Thus, we have:
\begin{align*}
\vr_B &= \vr_A^0 + \vr_A - \vr_A^0 + |\vr_B^0 - \vr_A^0| (\cos \theta_B^0 + \theta, \sin \theta_B^0 + \theta)^{\top} \\
&=  \vr_A^0 + \vr_A - \vr_A^0 + |\vr_B^0 - \vr_A^0|  \begin{pmatrix}
\cos \theta & -\sin \theta\\
\sin \theta & \cos \theta
\end{pmatrix}
\begin{pmatrix}\cos \theta_B^0\\ \sin \theta_B^0\end{pmatrix}
\\
&=\vr_A - \vr_A^0 +  \begin{pmatrix}
\cos \theta & -\sin \theta\\
\sin \theta & \cos \theta
\end{pmatrix}(\vr_B^0-\vr_A^0)+ \vr_A^0.
\end{align*}
Thus, the position of the entire body can be described using the translation of the reference point $\vr = \vr_A - \vr_A^0$, and the orientation of the body $\theta$. Let $\Omega = \{ \vr_B^0\in \R^2| \text{$B$ is the particle constitute the rigid body}\}$ denote the set of particle positions when the rigid body is in its reference placement.  The set of positions corresponding to the state $z = (\vr, \theta)$ in the current placement can then be expressed as:
\begin{equation*}
    \mathcal{S}(z) = \vr + \Phi_{\theta}(\Omega),
\end{equation*}
where $\vr = \vr_A - \vr_A^0$ represents the translation of the object, and
$\Phi_{\theta}: \R^2\rightarrow \R^2$ is defined as
\begin{equation*}
\Phi_{\theta} ( \vr_B^0) =\begin{pmatrix}
\cos \theta & -\sin \theta\\
\sin \theta & \cos \theta
\end{pmatrix} (\vr_B^0 - \vr_A^0) + \vr_A^0,
\end{equation*}
which represents the rotation of the object. Given that there are $K$ objects, the rigid body motion on a two-dimensional plane can be given by \cref{eq:ode2}.

\subsection{Proof of \cref{theorem}}\label{sec:proofs}
\subsubsection{Function representation of CNN autoencoders}
For the convenience of analysis, we represent the feature maps and the filters of CNN autoencoder as functions defined on a two-dimensional plane:
\begin{equation}\label{eq:cnn1}
\begin{aligned}
&\text{Input: } &&\mathcal{I}_0 (y^{0},z),\\
&\text{Hidden layers: }
&&\mathcal{I}_{l}(y, z)= \sum_{y^{l-1}\in Y^{l-1}} \mathcal{I}_{l-1}(y^{l-1},z) \phi_{l}(y^{l-1}, y) \varepsilon_l,\ l=1,\cdots, L,\\
&\text{Output: }&&\mathcal{I}_{L}(y^{L}, z),\ y^{L} \in Y^{L}.
\end{aligned}
\end{equation}
where the input is defined as
\begin{equation*}
\mathcal{I}_{0}(y,\ z) =
\left\{
\begin{aligned}
&[\mI_{\delta} \circ \mathcal{S}(z)]_{i_1, i_2}, && \text{if } y \in [i_1\delta,(i_1+1)\delta)\times [i_2\delta, (i_2+1)\delta), \quad i_1, i_2=0,\cdots, I,\\
&0, &&\text{if } y\notin [0,1]^2.
\end{aligned}
\right.
\end{equation*}
I.e., $\mathcal{I}_{0}$ is a piece-wise constant approximation of the indicator function of the set $\mathcal{S}(z)$, with a partition size of $\delta$.
Here, we denote the evaluation set of the$l$-th layer as $Y^l\subset\R^2$, which is determined by the parameter stride:
\begin{equation}\label{eq:yl}
Y^l = \left\{\left(i_1\delta_l,\ i_2\delta_l \right)\right\}_{i_1, i_2=0}^{\left\lfloor {I}/{\prod_{i=1}^l s_{i}}\right\rfloor + J_l},\ \delta_l = (\prod_{i=1}^{l} s_{i})\delta, \ l=0,\cdots, L.
\end{equation}
$\phi^l: \R^2\times \R^2 \rightarrow \R$ represent the filter of $l$-th layer, i.e,
\begin{equation*}
\phi_{l}\left( y^l + y, y^{l}\right) =
\left\{
\begin{aligned}
& \mathcal{W}_l(j_1\delta, j_2\delta), &&\text{if }y \in [j_1\delta_{l-1},(j_1+1)\delta_{l-1} ) \times [j_2\delta_{l-1},(j_2+1)\delta_{l-1} ),\\
& &&j_1, j_2=0,\cdots,J_l,\\
&0, && \text{otherwise}.
\end{aligned}
\right.
\end{equation*}
Then it is straightforward to verify that  $\mathcal{I}_{l}:\R^2\times \mathcal{Z} \rightarrow \R$ represents the function corresponding to the feature map $\mI_l$, such that:
\begin{equation*}
\mathcal{I}_{l} \big((i_1\delta_l,\ i_2\delta_l) ,\ z \big) = [\mI_{l}]_{i_1, i_2},\ l=0,\cdots, L.
\end{equation*}
Refer to \cref{fig:cnn} for an illustration. All functions here depend on $\delta$. To avoid redundancy, we omit explicit notation of this dependence.
\begin{figure}[htbp]
\centerline{\includegraphics[width=1\linewidth]{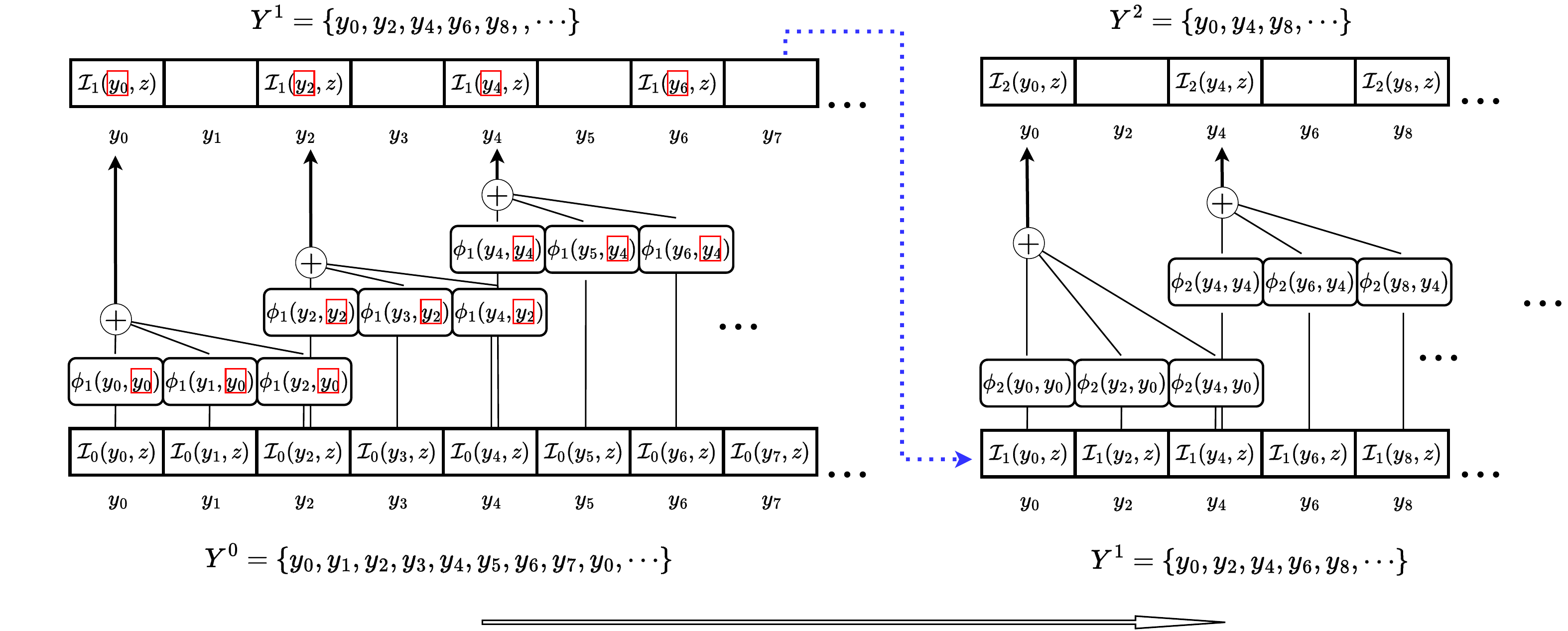}}
\caption{Illustration of the convolution operation.
The stride of 2 halves the size of the feature map, and a new evaluation set is generated by removing half of the elements from the previous one, specifically by deleting every other element. The blue dashed lines indicate operations, such as activation layers, that do not change the evaluation set.
}
\label{fig:cnn}
\end{figure}

Under \cref{eq:cnn1}, Assumption  \ref{ass:eva1} can be expressed in the following equivalent forms.
\begin{assumption}\label{ass:eva2}
There exist constant $M_{\Delta}$ such that if $\norm{\Delta}_{\infty} \leq M_{\Delta}$, then
\begin{equation*}
\mathcal{I}_l(y^{l},z)=0, \ \forall y^l\in (Y^l \cup (Y^l-\Delta)) \backslash (Y^l \cap (Y^l-\Delta)).
\end{equation*}
\end{assumption}

Under \cref{eq:cnn1}, \cref{theorem} has the following equivalent form.

\begin{theorem}\label{theorem2}
Assume that the underlying dynamical system is a rigid body motion (\ref{eq:ode2}) on a two-dimensional plane. If Assumption \ref{ass:eva2},  let $c_{\phi}$ be constant  satisfying
\begin{equation*}
\max_{l=1,\cdots,L}\max_{y^{l} \in \R^2}|\phi_{l}(y^{l-1}_1, y^{l}) - \phi_{l}(y^{l-1}_2, y^{l})| \leq \frac{c_{\phi}}{\prod_{i=1}^{l-1} s_{i}} \norm{y^{l-1}_1-y^{l-1}_2},\ \forall y^{l-1}_1, y^{l-1}_2 \in \R^2,
\end{equation*}
and if $s_l=2$ for $l=1, \cdots,L^*-1$, then for any $z_1=(z_1^t, z_1^r), z_2=(z_2^t, z_2^r) \in \mathcal{Z}$,
\begin{equation*}
\norm{\mathcal{E}\circ \mI_{\delta} \circ \mathcal{S}( z_1) - \mathcal{E} \circ \mI_{\delta} \circ \mathcal{S}(z_2)}\leq C c_{\phi} \norm{z_1^r-z_2^r} + \frac{Cc_{\phi}}{{2^{L^*-1}}}\norm{z_1^t-z_2^t},\ \text{ as }\ \delta\rightarrow 0.
\end{equation*}
Here $C$ is a constant independent of $\delta$ and $z$.

\end{theorem}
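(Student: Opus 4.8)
The plan is to prove the functional form \cref{theorem2}, working entirely inside the representation \cref{eq:cnn1}. The strategy has three stages: peel off the layers beyond $L^*$ as ordinary bounded linear maps, split the state change into a pure rotation and a pure translation, and then bound these two pieces by genuinely different mechanisms — local transport for rotation, shift--equivariance for translation — which is exactly what produces the asymmetric constants $c_\phi$ and $c_\phi/2^{L^*-1}$.

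First I would reduce the claim to a bound on $\norm{\mathcal{I}_{L^*}(\cdot,z_1) - \mathcal{I}_{L^*}(\cdot,z_2)}$. Layers $L^*+1,\dots,L$ are standard convolutions with filters of fixed ($\delta$-independent) size and bounded entries, so each is a bounded (hence Lipschitz) linear map between the finite-dimensional feature spaces, with operator norm $\mathcal{O}(1)$ by the normalization $\norm{\mathcal{I}_l}=\mathcal{O}(1)$; composing them gives $\norm{\mathcal{E}_\delta\circ\mI_\delta\circ\mathcal{S}(z_1) - \mathcal{E}_\delta\circ\mI_\delta\circ\mathcal{S}(z_2)} \le C_0\,\norm{\mathcal{I}_{L^*}(\cdot,z_1) - \mathcal{I}_{L^*}(\cdot,z_2)}$ with $C_0$ independent of $\delta$. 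Next I would insert the intermediate state $\bar z = (z_1^t, z_2^r)$ and split by the triangle inequality into a pure-rotation change $z_1\to\bar z$ and a pure-translation change $\bar z\to z_2$, using the rigid-body form $\mathcal{S}(z)=\cup_k \Phi_{\theta_k}(\Omega_k)+\vr_k$ from \cref{app:rigid} so that each step moves the (pairwise disjoint) supports in a controlled way.

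For the rotation step I cannot use equivariance, since the rotation diffeomorphism $T$ (rotating each body about its centre) is not a global shift. Instead I would reindex the layer-$1$ sum by $u=Ty^0$, which transfers $T$ onto the argument of $\phi_1$, and bound the filter difference by $c_\phi\,\sup\norm{T^{-1}u-u}$. Since every body lies in $[0,1]^2$, the transport displacement is at most the diameter times the angle change, i.e.\ $\mathcal{O}(\norm{z_1^r-z_2^r})$, and the first filter's Lipschitz constant is $c_\phi$ (undivided); propagating through the bounded layers $2,\dots,L^*$ then gives the $Cc_\phi\norm{z_1^r-z_2^r}$ term. For the translation step, all bodies shift by the corresponding components of $\Delta:=z_2^t-\bar z^t$, whose sizes are bounded by $\norm{z_1^t-z_2^t}$, so the input satisfies $\mathcal{I}_0(\cdot,z_2)=\mathcal{I}_0(\cdot - \Delta, \bar z)$ up to a discretization error supported in an $\mathcal{O}(\delta)$-neighbourhood of the object boundaries. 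A direct transport bound as above would again only give $c_\phi\norm{\Delta}$; to obtain the sharper $c_\phi/2^{L^*-1}$ I instead exploit that a global shift commutes with strided convolution up to remainders that the filter continuity bounds by the grid spacing $\delta_l$ at each layer, hence vanishing as $\delta\to0$. What survives is an approximate shift $\mathcal{I}_{L^*}(\cdot,z_2)\approx \mathcal{I}_{L^*}(\cdot-\Delta,\bar z)$, and I would bound $\norm{\mathcal{I}_{L^*}(\cdot-\Delta,\bar z)-\mathcal{I}_{L^*}(\cdot,\bar z)}$ by the spatial smoothness of the output map: perturbing the evaluation point touches only the coarsest filter $\phi_{L^*}$, whose Lipschitz constant is $c_\phi/2^{L^*-1}$, so treating the shifted-filter difference as a bounded linear operator and invoking $\norm{\mathcal{I}_l}=\mathcal{O}(1)$ gives a bound of $(Cc_\phi/2^{L^*-1})\norm{\Delta}$.

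The main obstacle I anticipate is the translation step in the limit $\delta\to 0$: the physical shift $\Delta$ is a continuous displacement that is generally not a multiple of any layer's grid spacing, so "commuting the shift through the strided convolutions" is only approximate, and I must show that the accumulated rounding and boundary mismatches — precisely the cells lying in $Y^l$ but not in its translate, which \cref{ass:eva2} forces to carry zero feature value — together with the $\mathcal{O}(\delta)$ error in approximating the indicator $\mathcal{I}_0$, all vanish uniformly. Making the operator-norm estimate for the surviving grid-aligned shift precise, so that the cell count $\sim\delta_{L^*}^{-2}$ does not spoil the bound (again using $\norm{\mathcal{I}_l}=\mathcal{O}(1)$), is the other delicate point, and is exactly where the factor $1/2^{L^*-1}$ must be extracted cleanly.
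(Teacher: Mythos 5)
Your proposal is correct in its essential mechanism and rests on the same two pillars as the paper's proof: a volume-preserving transport (change-of-variables) bound for the rotation part and for sub-grid residuals, and grid-aligned shift-equivariance combined with the Lipschitz constant $c_\phi/2^{L^*-1}$ of the coarsest filter for the translation part. Where you genuinely diverge is in the bookkeeping of the translation step. The paper (\cref{lem:dis1}, \cref{corollary}) distributes the displacement across \emph{all} layers through a recursive modular decomposition $i_{k,j}^{l-1}=i_{k,j}^{l}\bmod 2^{l-1}$, so that layer $l$ absorbs a shift that is a multiple of its input grid spacing $2^{l-1}\delta$ but smaller than $2^{l}\delta$; each such piece costs $\mathcal{O}(c_\phi\delta)$, except the topmost piece, which carries the $c_\phi/2^{l^*-1}$ bulk. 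You instead perform a single split: the component of $\Delta^t$ lying in $(2^{L^*-1}\delta)\mathbb{Z}^2$ commutes exactly through all strided layers (this is precisely where \cref{ass:eva2} and the shift property \cref{pro:cnn_kernel} enter, as you anticipate) and is absorbed entirely by $\phi_{L^*}$ via the analogue of \cref{cor:deri}, while the residual of size $\mathcal{O}(2^{L^*-1}\delta)$ goes through the transport bound and vanishes as $\delta\to 0$. This is simpler --- it avoids the recursive sequence $\{\hat\Delta^l\}$ and the $l^*_{k,j}$ bookkeeping --- and it suffices for the asymptotic statement; what the paper's finer decomposition buys is a non-asymptotic bound whose constant $c_\phi/2^{l^*_{k,j}-1}$ adapts to the size of the displacement in pixel units, which is sharper for displacements below the coarsest grid scale but is not needed for the theorem as stated.

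Two points in your write-up need repair, though both are fixable. First, the theorem covers $K\geq 1$ bodies translating by \emph{different} amounts $\Delta_k$, so there is no single global shift, and your identity $\mathcal{I}_0(\cdot,z_2)=\mathcal{I}_0(\cdot-\Delta,\bar z)$ is false as written for $K>1$. You need the superposition decomposition $\mathcal{I}_l=\sum_{k}\mathcal{I}_l^k$ (the paper's \cref{lemma1}), which is available because the layers in \cref{eq:cnn1} are linear (activations were dropped), and you then run the shift argument body by body. Second, ``reindexing the layer-$1$ sum by $u=Ty^0$'' is not legitimate for a lattice sum, since a rotation does not map grid points to grid points; the change of variables must be performed after passing to the $\delta\to 0$ limit, where $\mathcal{I}_1$ converges to $\frac{1}{|\mathcal{S}(z)|}\int_{\mathcal{S}(z)}\phi_1(y^0,y^1)\,\d y^0$ --- exactly how the paper's $L^*=1$ case proceeds, and consistent with the conclusion being asymptotic in $\delta$. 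A smaller quibble: the $\mathcal{O}(1)$ operator norm of the layers above $L^*$ follows from bounded filter entries together with the normalization $\varepsilon_l$ (the paper's quantity $M_\phi$), not from the feature-map bound $\norm{\mathcal{I}_l}=\mathcal{O}(1)$ alone.
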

Given that $\max_l\max_{ j_1, j_2} |\phi_{l}(y^{l-1}_1, y^{l})|/ (\lceil J_l/2 \rceil \delta) \leq c_{\phi}$, we must choose $J_l=\mathcal{O}(1/\delta)$. Under this choice, the normalization coefficients $\varepsilon_l$ can be set as: $\varepsilon_1=\delta^2/ |\mathcal{S}(z)|$, $\varepsilon_l=\delta^2$ for $l=2,\cdots,L$.

% \begin{theorem}
% For motion model \cref{eq:ode2} with $D\geq 1$, let $c_{\phi}$ be constants satisfying
% \begin{equation*}
% \max_{l=1,\cdots,L}\max_{y^{l} \in \R^2}|\phi_{l}(y^{l-1}_1+\Delta, y^{l}) - \phi_{l}(y^{l-1}_2, y^{l})| \leq \frac{c_{\phi}}{\prod_{i=1}^{l-1} s_{i}} \norm{y^{l-1}_1-y^{l-1}_2},\ \forall y^{l-1}_1, y^{l-1}_2 \in \R^2.
% \end{equation*}
% If we take $s_l=2$ for $l=1, \cdots,L-1$, then for any $z_1, z_2 \in \mathcal{Z}$, we have
% \begin{equation*}
% \max_{y^{L}\in Y^{L}} | \mathcal{I}_{L}(y^{L}, z_1) - \mathcal{I}_{L}(y^{L}, z_2)| \leq c_{\mathcal{I}} \norm{z_1-z_2}, \text{ and }c_{\mathcal{I}}= C\frac{c_{\phi}}{2^{L}}, \text{ as }\delta\rightarrow 0,
% \end{equation*}
% where $C$ is a constant independent of $\delta$.
% \end{theorem}

\subsubsection{The case of $L^*=1$}
We present the proof of \cref{theorem2} for the case of $L^*=1$.

Consider the general latent ODE (\ref{eq:ode}). Assume that the image of $z$ can be represented as $\mathcal{S}(z)$. Furthermore, suppose there exist volume-preserving maps $\Phi_{\Delta}:\R^2\rightarrow \R^2$, dependent on $\Delta\in \R^D$, such that $\Phi_{\Delta}(\mathcal{S}(z)) = \mathcal{S}(z+\Delta)$ and $\| \Phi_{\Delta}(x) - x \| \leq c_1\norm{\Delta}$ for some constant $c$. It is straightforward to verify that rigid body motions, including translation and rotation, satisfy these conditions.

Let
\begin{equation*}
\tilde{\mathcal{I}}_{1}(y^{1}, z)= \frac{1}{|\mathcal{S}(z)|} \int_{\R^2}\mathcal{I}_0(y^0, z)\phi_{1}(y^{0}, y^{1}) dy^0 = \frac{1}{|\mathcal{S}(z)|}\int_{S(z)}\phi_{1}(y^{0}, y^{1}) dy^0,
\end{equation*}
we can verify that $\lim_{\delta\rightarrow0} \mathcal{I}_{1}(y^{1}, z) = \tilde{\mathcal{I}}_{1}(y^{1}, z)$.

In addition, we have
\begin{align*}
|\tilde{\mathcal{I}}_{1}(y^{1}, z+\Delta) - \tilde{\mathcal{I}}_{1}(y^{1}, z)|=&\frac{1}{|\mathcal{S}(z)|}\left|\int_{\mathcal{S}(z + \Delta)}\phi_{1}(y^{0}, y^{1}) dy^0 - \int_{\mathcal{S}(z)}\phi_{1}(y^{0}, y^{1}) dy^0\right|\\
=& \frac{1}{|\mathcal{S}(z)|}\left|\int_{\mathcal{S}(z)}\phi_{1}(\Phi_{\Delta}(y^0), y^{1}) dy^0 - \int_{\mathcal{S}(z)}\phi_{1}(y^{0}, y^{1}) dy^0\right|\\
\leq & \frac{1}{|\mathcal{S}(z)|}|\mathcal{S}(z)| c_1 c_{\phi}\norm{\Delta} \leq c_1 c_{\phi}\norm{\Delta}.
\end{align*}

Taking $\delta^*$ such that for all $\delta\leq \delta^*$, the following inequalities hold:
\begin{align*}
&|\mathcal{I}_{1}(y^{1}, z)-\tilde{\mathcal{I}}_{1}(y^{1}, z)|\leq c_1c_{\phi}\norm{\Delta},\\
&|\mathcal{I}_{1}(y^{1}, z+\Delta)-\tilde{\mathcal{I}}_{1}(y^{1}, z+\Delta)|\leq c_1c_{\phi}\norm{\Delta}.
\end{align*}
These inequalities yield
\begin{equation*}
|\mathcal{I}_{1}(y^{1}, z+\Delta) - \mathcal{I}_{1}(y^{1}, z)|\leq 3c_1c_{\phi}\norm{\Delta},
\end{equation*}
which concludes the proof.

\subsubsection{The case of rigid body translational motion}
We now present the proof of \cref{theorem2} for the case of rigid body motion involving only translation. We rewrite \cref{eq:cnn1} in the following form to emphasize the dependence of evaluation sets.
\begin{equation}\label{eq:cnn2}
\begin{aligned}
&\text{Input: } &&\mathcal{I}_0 (y^{0},z)\\
&\text{Hidden layers: }
&&\mathcal{I}_{l+1}(y, z| Y^l, \cdots, Y^0)= \sum_{y^{l}\in Y^{l}} \mathcal{I}_l(y^{l},z| Y^{l-1}, \cdots, Y^0) \phi_{l+1}(y^{l}, y),\ l\geq 0,\\
&\text{Output: }&&\mathcal{I}_{L}(y^{L}, z| Y^{L-1}, \cdots, Y^0),\ y^{L} \in Y^{L}.
\end{aligned}
\end{equation}
Under the above expression, if $Y^l$ is not of form given in \cref{eq:yl}, we can still compute the output.

\begin{definition}
Two evaluation sets $Y^l$ and $ \hat{Y}^l$ are called equivalent, denoted as $Y^l = \hat{Y}^l$, if $\ \forall y^l \in Y^l \cup \hat{Y}^l \backslash Y^l \cap \hat{Y}^l$, we have $ \mathcal{I}_l(y^{l},z| Y^{l-1}, \cdots, Y^0)=0$.
\end{definition}
We can readily check that two equivalent evaluation sets yield equivalent results for the convolution operation, i.e., if $Y^l = \hat{Y}^l$, $\mathcal{I}_{l+1}(y^{l+1}, z| Y^l,Y^{l-1}, \cdots, Y^0) = \mathcal{I}_{l+1}(y^{l+1}, z| \hat{Y}^l,Y^{l-1}, \cdots, Y^0)$.

Due to weight sharing, the convolution operation exhibits translational invariance. This property leads to the following characteristic of the function $\phi_{l}$ representing the weights of the filter.
\begin{property}\label{pro:cnn_kernel}
The function $\phi_{l}$ representing the weights of the convolution filter satisfies
\begin{equation*}
\phi_{l}(y^{l-1}+ \Delta, y^{l}) = \phi_{l}(y^{l-1}, y^{l} - \Delta), \quad \forall \Delta \in \R^2.
\end{equation*}
\end{property}
For our motion model \cref{eq:ode2}, if $K=1$, the image of $z$ is translationally invariant due to \cref{eq:image_of_state}, i.e., $\mathcal{I}_0(y, z+\Delta) = \mathcal{I}_0(y-\Delta, z)$. Similar property holds for each feature map.
\begin{lemma}
For motion model \cref{eq:ode2} with $K=1$, the function $\mathcal{I}_l$ representing the feature map satisfies
\begin{equation*}
\mathcal{I}_{l}(y^l, z+\Delta| Y^{l-1}, \cdots, Y^0) = \mathcal{I}_{l}(y^l-\Delta, z | Y^{l-1}-\Delta, \cdots, Y^0-\Delta), \quad \forall \Delta \in \R^2.
\end{equation*}
\end{lemma}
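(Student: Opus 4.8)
The plan is to prove the identity by induction on the layer index $l$, propagating the translation through the convolution recursion \cref{eq:cnn2}. The base case $l=0$ is exactly the translational invariance of the input already recorded above the lemma: since $K=1$ and the motion is pure translation, $\mathcal{S}(z+\Delta)=\mathcal{S}(z)+\Delta$, hence $\mathcal{I}_0(y^0, z+\Delta) = \mathcal{I}_0(y^0-\Delta, z)$. There is no evaluation-set argument at level $0$, so the base case needs nothing further.

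For the inductive step, suppose the claim holds at level $l$. Starting from the recursion
\begin{equation*}
\mathcal{I}_{l+1}(y, z+\Delta \mid Y^l, \cdots, Y^0) = \sum_{y^l \in Y^l} \mathcal{I}_l(y^l, z+\Delta \mid Y^{l-1}, \cdots, Y^0)\, \phi_{l+1}(y^l, y),
\end{equation*}
I would first substitute the inductive hypothesis to replace $\mathcal{I}_l(y^l, z+\Delta \mid \cdots)$ with $\mathcal{I}_l(y^l-\Delta, z \mid Y^{l-1}-\Delta, \cdots, Y^0-\Delta)$. Then I would reindex the sum by $\tilde{y}^l = y^l - \Delta$, so that the summation set $Y^l$ becomes $Y^l - \Delta$ and the filter argument becomes $\phi_{l+1}(\tilde{y}^l+\Delta, y)$. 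Invoking \cref{pro:cnn_kernel}, namely $\phi_{l+1}(\tilde{y}^l+\Delta, y) = \phi_{l+1}(\tilde{y}^l, y-\Delta)$, transfers the shift from the feature-map argument onto the output coordinate. The resulting expression
\begin{equation*}
\sum_{\tilde{y}^l \in Y^l - \Delta} \mathcal{I}_l(\tilde{y}^l, z \mid Y^{l-1}-\Delta, \cdots, Y^0-\Delta)\, \phi_{l+1}(\tilde{y}^l, y-\Delta)
\end{equation*}
is, by the very definition \cref{eq:cnn2} read at output coordinate $y-\Delta$ with evaluation set $Y^l-\Delta$, precisely $\mathcal{I}_{l+1}(y-\Delta, z \mid Y^l - \Delta, Y^{l-1}-\Delta, \cdots, Y^0-\Delta)$, which is the claim at level $l+1$ and closes the induction.

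The only delicate point is the bookkeeping of the evaluation sets: translating $z$ forces a matching translation of every set $Y^0, \ldots, Y^l$, and it is essential that the reindexing $Y^l \mapsto Y^l - \Delta$ of the summation variable is carried out in lockstep with applying the inductive hypothesis, which supplies the already-shifted sets $Y^{l-1}-\Delta, \ldots, Y^0-\Delta$. Because \cref{eq:cnn2} is written to accept arbitrary evaluation sets, the shifted sets are admissible inputs and no equivalence-class argument (in the sense of the preceding definition) is needed here. I expect this simultaneous tracking of the shift in the variable, the filter, and the conditioning sets---rather than any analytic estimate---to be the main thing to get exactly right.
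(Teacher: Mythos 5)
Your proof is correct and essentially identical to the paper's: the same induction on $l$, with the same sequence of moves in the inductive step --- apply the inductive hypothesis, reindex the sum by $\tilde{y}^l = y^l - \Delta$ so it runs over $Y^l - \Delta$, invoke \cref{pro:cnn_kernel} to move the shift onto the output coordinate, and read off the result as $\mathcal{I}_{l+1}(y^{l+1}-\Delta, z \mid Y^l-\Delta, \cdots, Y^0-\Delta)$ from \cref{eq:cnn2}. The only cosmetic difference is that you omit the normalization factors $\varepsilon_l$, which pass through every step unchanged.
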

\begin{proof}
The case when $l = 0$ is obvious. Suppose now
\begin{equation*}
  \mathcal{I}_{l}(y^l, z+\Delta| Y^{l-1}, \cdots, Y^0) = \mathcal{I}_{l}(y^l-\Delta, z | Y^{l-1}-\Delta, \cdots, Y^0-\Delta),
\end{equation*}
then,
\begin{align*}
&\mathcal{I}_{l+1}(y^{l+1}, z+\Delta | Y^{l}, \cdots, Y^0) \\
=& \sum_{y^{l}\in Y^l} \mathcal{I}_l(y^{l},z+\Delta| Y^{l-1}, \cdots, Y^0) \phi_{l+1}(y^{l}, y^{l+1})\varepsilon_l\quad &&(\text{by \cref{eq:cnn2}}) \\
=& \sum_{y^{l}\in Y^l} \mathcal{I}_{l}(y^l-\Delta, z | Y^{l-1}-\Delta, \cdots, Y^0-\Delta) \phi_{l+1}(y^{l}, y^{l+1})\varepsilon_l\quad &&(\text{by inductive hypothesis})  \\
=& \sum_{\hat{y}^{l}\in Y^l-\Delta} \mathcal{I}_{l}(\hat{y}^{l}, z | Y^{l-1}-\Delta, \cdots, Y^0-\Delta) \phi_{l+1}(\hat{y}^{l} + \Delta, y^{l+1})\varepsilon_l \quad &&(\text{by taking $\hat{y}^{l} = y^l-\Delta$})\\
=& \sum_{\hat{y}^{l}\in Y^l-\Delta} \mathcal{I}_{l}(\hat{y}^{l}, z | Y^{l-1}-\Delta, \cdots, Y^0-\Delta) \phi_{l+1}(\hat{y}^{l}, y^{l+1} - \Delta)\varepsilon_l\quad &&(\text{by Property \ref{pro:cnn_kernel}}) \\
=&\mathcal{I}_{l+1}(y^{l+1}-\Delta, z | Y^{l}-\Delta, \cdots, Y^0-\Delta).\quad &&(\text{by \cref{eq:cnn2}})
\end{align*}
Hence the induction holds and the proof is completed.
\end{proof}

As a direct consequence, if $Y^{l-1}-\Delta=Y^{l-1}, \cdots, Y^0-\Delta = Y^0$, we have the following corollary. Here, we omit the notation of evaluation set for brevity.

\begin{corollary}
For motion model \cref{eq:ode2} with $K=1$, and further assume that $\Delta \in \R^2$ satisfies $Y^{l-1}-\Delta=Y^{l-1}$, we have
\begin{equation*}
\mathcal{I}_{l}(y^l, z+\Delta) = \mathcal{I}_{l}(y^l-\Delta, z).
\end{equation*}
\end{corollary}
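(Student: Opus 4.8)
The plan is to obtain the corollary as an immediate specialization of the preceding lemma, with essentially all the work already done there. The lemma establishes the shift identity $\mathcal{I}_{l}(y^l, z+\Delta| Y^{l-1}, \cdots, Y^0) = \mathcal{I}_{l}(y^l-\Delta, z | Y^{l-1}-\Delta, \cdots, Y^0-\Delta)$ for \emph{every} $\Delta$, so the only remaining task is to replace the shifted evaluation sets $Y^{j}-\Delta$ that appear on the right-hand side by the original sets $Y^{j}$. Once that replacement is justified, both sides carry identical evaluation-set arguments, which I drop by the brevity convention to land exactly on $\mathcal{I}_{l}(y^l, z+\Delta) = \mathcal{I}_{l}(y^l-\Delta, z)$.

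The key step is therefore to show that the single hypothesis $Y^{l-1}-\Delta = Y^{l-1}$ already forces $Y^{j}-\Delta = Y^{j}$, in the sense of the equivalence of evaluation sets defined above, for every coarser layer $j=0,\cdots,l-1$. Since the strides $s_i$ are integers, the spacings satisfy $\delta_{j}\mid \delta_{l-1}$, so any shift $\Delta$ that aligns with the grid $Y^{l-1}$ also aligns with each finer grid $Y^{j}$; the two grids $Y^{j}$ and $Y^{j}-\Delta$ then coincide except at the padded boundary cells. I would then invoke the property recorded just before the lemma—that equivalent evaluation sets produce identical convolution outputs—applying it layer by layer to substitute $Y^{j}$ for $Y^{j}-\Delta$ on the right-hand side of the lemma without changing the feature-map value.

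The one point requiring care, and the main obstacle, is that $Y^{j}-\Delta = Y^{j}$ is genuinely an \emph{equivalence} rather than a literal set identity: because the evaluation sets are finite and the input is zero-padded, the two grids really do differ near the boundary. This is precisely where Assumption \ref{ass:eva2} enters, as it guarantees that $\mathcal{I}_{j}(\cdot,z)$ vanishes on the symmetric difference $(Y^{j}\cup(Y^{j}-\Delta))\setminus(Y^{j}\cap(Y^{j}-\Delta))$ whenever $\norm{\Delta}_{\infty}\le M_{\Delta}$, so the boundary mismatch contributes nothing to the convolution sum and the equivalence property legitimately applies at each layer. Confirming that this vanishing condition covers every layer $j\le l-1$ for the admissible range of $\Delta$ is the only nontrivial bookkeeping; everything else is a direct substitution into the lemma's identity.
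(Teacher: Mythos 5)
Your proof is correct and follows essentially the same route as the paper, which offers no explicit argument beyond declaring the identity a direct consequence of the preceding lemma once the shifted evaluation sets $Y^{l-1}-\Delta,\cdots,Y^0-\Delta$ are replaced by $Y^{l-1},\cdots,Y^0$ via the equivalence of evaluation sets. Your extra bookkeeping — that invariance of the coarsest grid $Y^{l-1}$ propagates to the finer grids $Y^{j}$ because the spacings $\delta_j$ divide $\delta_{l-1}$, with Assumption \ref{ass:eva2} absorbing the boundary mismatch so the equivalence property applies layer by layer — is exactly the reasoning the paper leaves implicit.
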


\begin{corollary}\label{cor:deri}
For motion model \cref{eq:ode2} with $K=1$, and further assume that $\Delta$ satisfies $Y^{l-1}-\Delta=Y^{l-1}$, then we have
\begin{equation*}
\begin{aligned}
\mathcal{I}_{l}(y^{l}, z+\Delta) - \mathcal{I}_{l}(y^{l}, z)
=&\sum_{y^{l-1}\in Y^{l-1}} \mathcal{I}_{l-1}(y^{l-1},z) \left(\phi_{l}(y^{l-1}+\Delta, y^{l}) - \phi_{l}(y^{l-1}, y^{l})\right)\varepsilon_l. \\
\end{aligned}
\end{equation*}
\end{corollary}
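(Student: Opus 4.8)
The plan is to expand $\mathcal{I}_{l}(y^{l},z+\Delta)$ with one application of the convolution recurrence \cref{eq:cnn2} and then transfer the entire $\Delta$-dependence off the previous feature map and onto the filter, so that after subtracting $\mathcal{I}_{l}(y^{l},z)$ only the filter difference survives. First I would write the definition
\begin{equation*}
\mathcal{I}_{l}(y^{l}, z+\Delta)=\sum_{y^{l-1}\in Y^{l-1}} \mathcal{I}_{l-1}(y^{l-1},z+\Delta)\,\phi_{l}(y^{l-1}, y^{l})\,\varepsilon_l,
\end{equation*}
and record the analogous expansion of $\mathcal{I}_{l}(y^{l},z)$ for the subtraction at the end.

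The key move is to rewrite $\mathcal{I}_{l-1}(y^{l-1},z+\Delta)$ as a translate of the \emph{unshifted} feature map. Applying the translation corollary established just above, now at level $l-1$, gives $\mathcal{I}_{l-1}(y^{l-1},z+\Delta)=\mathcal{I}_{l-1}(y^{l-1}-\Delta,z)$, whence
\begin{equation*}
\mathcal{I}_{l}(y^{l}, z+\Delta)=\sum_{y^{l-1}\in Y^{l-1}} \mathcal{I}_{l-1}(y^{l-1}-\Delta,z)\,\phi_{l}(y^{l-1}, y^{l})\,\varepsilon_l.
\end{equation*}
I would then substitute $\hat{y}^{l-1}=y^{l-1}-\Delta$, turning the summand into $\mathcal{I}_{l-1}(\hat{y}^{l-1},z)\,\phi_{l}(\hat{y}^{l-1}+\Delta, y^{l})\,\varepsilon_l$ with the index now ranging over $Y^{l-1}-\Delta$. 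Since $Y^{l-1}-\Delta=Y^{l-1}$ in the sense of equivalent evaluation sets, the terms on the symmetric difference contribute nothing because $\mathcal{I}_{l-1}(\cdot,z)$ vanishes there, so the summation set may be restored to $Y^{l-1}$.

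It then remains to subtract the expansion of $\mathcal{I}_{l}(y^{l},z)$ term by term and factor out $\mathcal{I}_{l-1}(y^{l-1},z)$, which produces exactly
\begin{equation*}
\mathcal{I}_{l}(y^{l}, z+\Delta) - \mathcal{I}_{l}(y^{l}, z)=\sum_{y^{l-1}\in Y^{l-1}} \mathcal{I}_{l-1}(y^{l-1},z)\left(\phi_{l}(y^{l-1}+\Delta, y^{l}) - \phi_{l}(y^{l-1}, y^{l})\right)\varepsilon_l.
\end{equation*}
Note that no appeal to the filter translation-invariance is needed here beyond what is already packaged inside the translation corollary; the re-indexing itself only renames the dummy variable.

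The main obstacle is bookkeeping of the evaluation-set hypotheses rather than any genuine analysis. Applying the translation corollary to layer $l-1$ requires $Y^{l-2}-\Delta=Y^{l-2}$, whereas the re-indexing step requires $Y^{l-1}-\Delta=Y^{l-1}$, so I must ensure both are available under the single stated hypothesis. This is where the nested structure of the evaluation grids matters: since $\delta_{l-1}=s_{l-1}\delta_{l-2}$ with integer stride (here $s_{l-1}=2$), a shift $\Delta$ that leaves the coarse grid $Y^{l-1}$ (spacing $\delta_{l-1}=2^{l-1}\delta$) invariant automatically leaves every finer grid $Y^{j}$, $j\le l-1$, invariant, and the equivalence in the sense of the zero-boundary definition follows from \cref{ass:eva2} once $\norm{\Delta}_{\infty}\le M_{\Delta}$. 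Verifying this chain of equivalences, and being careful that the shorthand ``$Y^{l-1}-\Delta=Y^{l-1}$'' abbreviates the full chain $Y^{l-1}-\Delta=Y^{l-1},\dots,Y^{0}-\Delta=Y^{0}$, is the only delicate point.
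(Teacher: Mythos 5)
Your proof is correct and follows essentially the same route as the paper's: expand the difference via the recurrence in \cref{eq:cnn2}, use the translation corollary to rewrite $\mathcal{I}_{l-1}(y^{l-1},z+\Delta)$ as $\mathcal{I}_{l-1}(y^{l-1}-\Delta,z)$, re-index by $\hat{y}^{l-1}=y^{l-1}-\Delta$, and invoke the equivalence $Y^{l-1}-\Delta=Y^{l-1}$ to restore the summation set before factoring. Your closing remark on the evaluation-set bookkeeping (coarse-grid invariance propagating to all finer grids under stride $2$, with the symmetric-difference terms killed via \cref{ass:eva2}) is actually more careful than the paper, which leaves that chain of hypotheses implicit.
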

\begin{proof}
\begin{align*}
&\mathcal{I}_{l}(y^{l}, z+\Delta) - \mathcal{I}_{l}(y^{l}, z) \\
=& \sum_{y^{l-1}\in Y^{l-1}} \left(\mathcal{I}_{l-1}(y^{l-1},z+\Delta) - \mathcal{I}_{l-1}(y^{l-1},z)\right) \phi_{l}(y^{l-1}, y^{l})\varepsilon_l\\
=& \sum_{y^{l-1}\in Y^{l-1}} \mathcal{I}_{l-1}(y^{l-1}-\Delta, z) \phi_{l}(y^{l-1}, y^{l}) - \sum_{y^{l-1}\in Y^{l-1}}\mathcal{I}_{l-1}(y^{l-1},z) \phi_{l}(y^{l-1}, y^{l})\varepsilon_l\\
=& \sum_{\hat{y}^{l-1}\in Y^{l-1}-\Delta } \mathcal{I}_{l-1}(\hat{y}^{l-1}, z) \phi_{l}(\hat{y}^{l-1} + \Delta, y^{l}) - \sum_{y^{l-1}\in Y^{l-1}}\mathcal{I}_l(y^{l-1},z) \phi_{l}(y^{l-1}, y^{l})\varepsilon_l\\
=&\sum_{y^{l-1}\in Y^{l-1}} \mathcal{I}_{l-1}(y^{l-1},z) \left(\phi_{l}(y^{l-1}+\Delta, y^{l}) - \phi_{l}(y^{l-1}, y^{l})\right)\varepsilon_l.
\end{align*}
\end{proof}

For convenience, we introduce the following notations:
\begin{itemize}
    \item For $\Delta$ satisfying $Y^0-\Delta= Y^0$, we denote
    \begin{equation*}
        a_l(\Delta) = \max_{y^{l}\in Y^{l}} |\mathcal{I}_{l}(y^{l}, z+\Delta) - \mathcal{I}_{l}(y^{l}, z)|,
    \end{equation*}
    \item For $\Delta$ satisfying $Y^{l-1}-\Delta= Y^{l-1}$, we denote
    \begin{equation*}
        b_l(\Delta) = \max_{y^{l} \in Y^{l}}\max_{y^{l-1}\in Y^{l-1}}|\phi_{l}(y^{l-1}+\Delta, y^{l}) - \phi_{l}(y^{l-1}, y^{l})|
    \end{equation*}
    \item We let $M_{\mathcal{I}}$ and $M_{\phi}$ be constants satisfying
    \begin{equation*}
\begin{aligned}
M_{\mathcal{I}}= \max_l \max_{z\in \mathcal{Z}}\left(\sum_{y^{l}\in Y^{l},\ y^{l}\neq 0} |\mathcal{I}_{l}(y^{l},z)| \varepsilon_l \right),\
M_{\phi} = \max_{l}\max_{y^{l}\in Y^{l}} \sum_{y^{l-1}\in Y^{l-1}}|\phi_{l}(y^{l-1}, y^{l})|\varepsilon_l.
\end{aligned}
\end{equation*}
\end{itemize}

Then we have the following property.
\begin{lemma}\label{lem:dis1}
For motion model \cref{eq:ode2} with $K=1$, let $\{\hat{\Delta}^{l}\}_{l=1}^L$ be a sequence defined recursively by
$\hat{\Delta}^{L} = \Delta$ and for $l=L-1, \cdots, 1$, $\hat{\Delta}^{l}$ is a variation satisfying $Y^{l}-(\hat{\Delta}^{l+1}- \hat{\Delta}^{l})=Y^{l}$.
Then we have
\begin{equation*}
a_l(\hat{\Delta}^{l}) \leq M_{\phi}^{I} a_{l-I}(\hat{\Delta}^{l-I}) + \sum_{i=0}^{I-1} M_{\phi}^i M_{\mathcal{I}} b_{l-i}(\hat{\Delta}^{l-i} - \hat{\Delta}^{l-i-1}), \quad 1\leq I \leq l \leq L.
\end{equation*}
\end{lemma}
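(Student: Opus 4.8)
The plan is to reduce the claim to the single-step estimate
\[
a_l(\hat{\Delta}^l) \le M_{\phi}\, a_{l-1}(\hat{\Delta}^{l-1}) + M_{\mathcal{I}}\, b_l(\hat{\Delta}^l - \hat{\Delta}^{l-1}),
\]
valid for each layer $l$, and then to obtain the general inequality by induction on $I$. The base case $I=1$ is exactly this single-step bound. For the inductive step, assuming the stated inequality for $I$, I would substitute the single-step bound for $a_{l-I}(\hat{\Delta}^{l-I})$ into the leading term, multiply through by $M_{\phi}^{I}$, and collect: the newly created term $M_{\phi}^{I} M_{\mathcal{I}}\, b_{l-I}(\hat{\Delta}^{l-I}-\hat{\Delta}^{l-I-1})$ is precisely the $i=I$ summand, so the geometric factors telescope into $\sum_{i=0}^{I} M_{\phi}^{i} M_{\mathcal{I}}\, b_{l-i}(\hat{\Delta}^{l-i}-\hat{\Delta}^{l-i-1})$ with leading term $M_{\phi}^{I+1} a_{l-I-1}(\hat{\Delta}^{l-I-1})$, matching the claim for $I+1$.

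To establish the single-step bound I would split the target difference at the intermediate state $z+\hat{\Delta}^{l-1}$:
\[
\mathcal{I}_l(y^l, z+\hat{\Delta}^l) - \mathcal{I}_l(y^l, z) = \underbrace{\bigl[\mathcal{I}_l(y^l, z+\hat{\Delta}^l) - \mathcal{I}_l(y^l, z+\hat{\Delta}^{l-1})\bigr]}_{(\mathrm{A})} + \underbrace{\bigl[\mathcal{I}_l(y^l, z+\hat{\Delta}^{l-1}) - \mathcal{I}_l(y^l, z)\bigr]}_{(\mathrm{B})}.
\]
For term $(\mathrm{B})$ I expand both feature maps through the convolution \cref{eq:cnn2}, subtract, bound $|\mathcal{I}_{l-1}(y^{l-1},z+\hat{\Delta}^{l-1})-\mathcal{I}_{l-1}(y^{l-1},z)|$ by its maximum $a_{l-1}(\hat{\Delta}^{l-1})$, and control $\sum_{y^{l-1}}|\phi_l(y^{l-1},y^l)|\varepsilon_l \le M_{\phi}$, giving $|(\mathrm{B})| \le M_{\phi}\, a_{l-1}(\hat{\Delta}^{l-1})$. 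Term $(\mathrm{A})$ is the variation of $\mathcal{I}_l$ under the shift $\nu^l := \hat{\Delta}^l - \hat{\Delta}^{l-1}$ applied to the base state $z+\hat{\Delta}^{l-1}$; since the recursive construction of $\{\hat{\Delta}^l\}$ guarantees $Y^{l-1}-\nu^l = Y^{l-1}$ (the defining condition read at index $l-1$), \cref{cor:deri} applies and rewrites $(\mathrm{A})$ as $\sum_{y^{l-1}}\mathcal{I}_{l-1}(y^{l-1},z+\hat{\Delta}^{l-1})\bigl(\phi_l(y^{l-1}+\nu^l,y^l)-\phi_l(y^{l-1},y^l)\bigr)\varepsilon_l$. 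Bounding the filter difference by $b_l(\nu^l)$ and the residual weighted sum of $|\mathcal{I}_{l-1}|$ by $M_{\mathcal{I}}$ yields $|(\mathrm{A})| \le M_{\mathcal{I}}\, b_l(\nu^l)$, and taking the maximum over $y^l\in Y^l$ produces the single-step bound.

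The main obstacle is the careful verification that each tool fires at the correct layer: \cref{cor:deri} demands that the applied shift preserve the previous evaluation set, so I must confirm that every telescoped shift $\hat{\Delta}^{l-i}-\hat{\Delta}^{l-i-1}$ invoked at stage $i$ of the induction preserves $Y^{l-i-1}$ and hence that $b_{l-i}$ is well defined; this is exactly the recursive definition of the sequence $\{\hat{\Delta}^l\}$, so the construction was arranged precisely to make the corollary applicable at each step. A secondary, bookkeeping concern is matching normalization factors: the sum arising in $(\mathrm{A})$ carries $\varepsilon_l$ and ranges over all $y^{l-1}$, whereas $M_{\mathcal{I}}$ is defined with the layer-matched coefficient and excludes the corner index. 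I would reconcile this using the centering hypothesis \cref{ass:eva2}, under which the excluded corner term vanishes, together with $\varepsilon_l=\varepsilon_{l-1}$ away from the first layer, so that $M_{\mathcal{I}}$ genuinely dominates the weighted feature-map mass appearing in $(\mathrm{A})$.
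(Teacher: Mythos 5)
Your proposal is correct and follows essentially the same route as the paper's proof: both split the difference at the intermediate state $z+\hat{\Delta}^{l-1}$, bound the shift term via \cref{cor:deri} by $M_{\mathcal{I}}\, b_l(\hat{\Delta}^{l}-\hat{\Delta}^{l-1})$ and the remaining term by $M_{\phi}\, a_{l-1}(\hat{\Delta}^{l-1})$, then iterate the resulting one-step recursion (your explicit induction on $I$ is just a formalization of the paper's ``which yields''). Your final remark about reconciling the $\varepsilon_l$ normalization and the excluded corner index in the definition of $M_{\mathcal{I}}$ addresses bookkeeping the paper silently glosses over, and does not alter the argument.
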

\begin{proof}
Observing that $(\hat{\Delta}^{l} - \hat{\Delta}^{l-1})$ satisfies the condition in \cref{cor:deri}, we have
\begin{align*}
&|\mathcal{I}_{l}(y^{l}, z+\hat{\Delta}^{l})) - \mathcal{I}_{l}(y^{l}, z)|\\
=&|\mathcal{I}_{l}(y^{l}, z+\hat{\Delta}^{l-1}+(\hat{\Delta}^{l} - \hat{\Delta}^{l-1})) - \mathcal{I}_{l}(y^{l}, z+\hat{\Delta}^{l-1})| + |\mathcal{I}_{l}(y^{l}, z+\hat{\Delta}^{l-1}) - \mathcal{I}_{l}(y^{l}, z)|\\
\leq & \left|\sum_{y^{l-1}\in Y^{l-1}} \mathcal{I}_{l-1}(y^{l-1},z+\hat{\Delta}^{l-1}) \left(\phi_{l}(y^{l-1}+\hat{\Delta}^{l} - \hat{\Delta}^{l-1}, y^{l}) - \phi_{l}(y^{l-1}, y^{l})\right)\varepsilon_l \right|\\
&+\left|\sum_{y^{l-1}\in Y^{l-1}} \left(\mathcal{I}_{l-1}(y^{l-1},z+\hat{\Delta}^{l-1}) - \mathcal{I}_{l-1}(y^{l-1},z) \right) \phi_{l}(y^{l-1}, y^{l})\varepsilon_l\right|\\
\leq & M_{\mathcal{I}} b_l(\hat{\Delta}^{l} - \hat{\Delta}^{l-1}) + M_{\phi} a_{l-1}(\hat{\Delta}^{l-1}).
\end{align*}
This estimate can be rewritten as
\begin{equation*}
a_l(\hat{\Delta}^{l}) \leq M_{\mathcal{I}} b_l(\hat{\Delta}^{l} - \hat{\Delta}^{l-1}) + M_{\phi} a_{l-1}(\hat{\Delta}^{l-1}),
\end{equation*}
which yields
\begin{equation*}
a_l(\hat{\Delta}^{l}) \leq M_{\phi}^{I} a_{l-I}(\hat{\Delta}^{l-I}) + \sum_{i=0}^{I-1} M_{\phi}^i M_{\mathcal{I}} b_{l-i}(\hat{\Delta}^{l-i} - \hat{\Delta}^{l-i-1}),
\end{equation*}
and concludes the proof.
\end{proof}

\begin{lemma}\label{lemma1}
For motion model \cref{eq:ode2} with $K\geq 1$,
let $\Delta =(\Delta_1,\cdots, \Delta_K) $ with $\Delta_k \in \R^2$.
let $\{\hat{\Delta}^{l}=(\hat{\Delta}^l_1,\cdots, \hat{\Delta}^l_K), \ \hat{\Delta}^l_k\in \R^2\}_{l=1}^L$ be a sequence defined recursively by
$\hat{\Delta}^{L} = \Delta$ and for $l=L-1, \cdots, 1$, $\hat{\Delta}^{l}$ is a variation satisfying $Y^{l}-(\hat{\Delta}^{l+1}_k- \hat{\Delta}^{l}_k)=Y^{l}$ for $k=1,\cdots, K$.
If $\hat{\Delta}^0=0$, then we have
\begin{equation*}
a_l(\hat{\Delta}^{l}) \leq \sum_{k=1}^K\sum_{i=0}^{l-1} M_{\phi}^i M_{\mathcal{I}} b_{l-i}(\hat{\Delta}^{l-i}_k - \hat{\Delta}^{l-i-1}_k), \quad 1 \leq l \leq L.
\end{equation*}
\end{lemma}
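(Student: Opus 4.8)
The plan is to reduce the multi-body case ($K\geq 1$) to $K$ bookkeeping copies of the single-body argument behind \cref{lem:dis1}, exploiting that convolution is linear. First I would decompose the input by bodies: since \cref{eq:image_of_state} gives $\mathcal{S}(z)=\cup_{k=1}^K \Phi_{\theta_k}(\Omega_k)+\vr_k$ with the $K$ sets pairwise disjoint, for small enough $\delta$ no pixel meets two bodies, so the piecewise-constant input splits as $\mathcal{I}_0(y,z)=\sum_{k=1}^K \mathcal{I}_0^{(k)}(y,z)$, where $\mathcal{I}_0^{(k)}$ is the image of body $k$ alone. Because every hidden layer in \cref{eq:cnn2} is linear in its input, this decomposition propagates through the network, so $\mathcal{I}_l(y,z)=\sum_{k=1}^K \mathcal{I}_l^{(k)}(y,z)$ for every $l$, and each $\mathcal{I}_l^{(k)}$ is precisely the feature map generated by the $K=1$ motion consisting of body $k$. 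The point is that translating body $k$ by $\hat\Delta^l_k$ alters only the $k$-th summand, which then obeys the single-body translation identity (the corollary preceding \cref{cor:deri}) and the difference formula of \cref{cor:deri}.

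With this in hand I would mirror the one-layer peeling step in the proof of \cref{lem:dis1}, but split the state change across bodies. Writing
\[
\mathcal{I}_l(y^l, z+\hat\Delta^l) - \mathcal{I}_l(y^l, z) = \big[\mathcal{I}_l(y^l, z+\hat\Delta^l) - \mathcal{I}_l(y^l, z+\hat\Delta^{l-1})\big] + \big[\mathcal{I}_l(y^l, z+\hat\Delta^{l-1}) - \mathcal{I}_l(y^l, z)\big],
\]
the second bracket is bounded by $M_\phi\, a_{l-1}(\hat\Delta^{l-1})$ exactly as in the single-body proof. For the first bracket I would expand $\mathcal{I}_l=\sum_k \mathcal{I}_l^{(k)}$ and apply \cref{cor:deri} to each body: the change from $z+\hat\Delta^{l-1}$ to $z+\hat\Delta^l$ is, for body $k$, the pure translation $\hat\Delta^l_k-\hat\Delta^{l-1}_k$, and the recursion hypothesis supplies the evaluation-set invariance $Y^{l-1}-(\hat\Delta^l_k-\hat\Delta^{l-1}_k)=Y^{l-1}$ that \cref{cor:deri} requires. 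This rewrites the first bracket as $\sum_k\sum_{y^{l-1}} \mathcal{I}_{l-1}^{(k)}(y^{l-1},z+\hat\Delta^{l-1})\big(\phi_l(y^{l-1}+\hat\Delta^l_k-\hat\Delta^{l-1}_k,y^l)-\phi_l(y^{l-1},y^l)\big)\varepsilon_l$, which I would bound by $M_{\mathcal{I}}\sum_{k=1}^K b_l(\hat\Delta^l_k-\hat\Delta^{l-1}_k)$. Combining the two brackets yields the one-step recursion
\[
a_l(\hat\Delta^l) \leq M_\phi\, a_{l-1}(\hat\Delta^{l-1}) + M_{\mathcal{I}} \sum_{k=1}^K b_l(\hat\Delta^l_k - \hat\Delta^{l-1}_k).
\]

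I would then unroll this recursion from $l$ down to $0$, using $\hat\Delta^0=0$ so that $a_0(\hat\Delta^0)=0$. The geometric accumulation of the $M_\phi$ factors gives $a_l(\hat\Delta^l)\leq \sum_{i=0}^{l-1} M_\phi^i M_{\mathcal{I}}\sum_{k=1}^K b_{l-i}(\hat\Delta^{l-i}_k-\hat\Delta^{l-i-1}_k)$, and interchanging the two sums produces exactly the claimed bound. Structurally this is just \cref{lem:dis1} applied with $I=l$ to each single-body component and then summed over $k$.

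I expect the \textbf{main obstacle} to be the step bounding the per-body mass $\sum_{y^{l-1}}|\mathcal{I}_{l-1}^{(k)}(y^{l-1},z+\hat\Delta^{l-1})|\,\varepsilon_l$ by the global constant $M_{\mathcal{I}}$, which is defined through the \emph{full} feature map $\mathcal{I}_{l-1}=\sum_k \mathcal{I}_{l-1}^{(k)}$; a plain triangle inequality runs the wrong way if the single-body feature maps overlap with cancelling signs. I would close this gap with the pairwise disjointness of the sets $\Phi_{\theta_k}(\Omega_k)+\vr_k$: for the early layers $l\leq L^*$ where the result is used and for sufficiently small $\delta$, the filters have bounded spatial extent, so the supports of distinct $\mathcal{I}_l^{(k)}$ stay disjoint; hence $\sum_{y^l}|\mathcal{I}_l|=\sum_k\sum_{y^l}|\mathcal{I}_l^{(k)}|$ and each per-body mass is dominated by the total, i.e. by $M_{\mathcal{I}}$.
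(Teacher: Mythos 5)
Your main line is correct and is essentially the paper's own proof: decompose the input image into per-body images, propagate that decomposition through the layers by linearity of the convolution, then run the single-body machinery on each component and sum over $k$. Whether one re-derives the one-step recursion $a_l(\hat\Delta^l)\le M_\phi\,a_{l-1}(\hat\Delta^{l-1})+M_{\mathcal{I}}\sum_k b_l(\hat\Delta^l_k-\hat\Delta^{l-1}_k)$ at the multi-body level and unrolls it (your route) or applies \cref{lem:dis1} with $I=l$ to each $\mathcal{I}_l^k$ and sums at the end (the paper's route) is a cosmetic reorganization; in both cases $\hat\Delta^0=0$ kills the $M_\phi^l a_0$ term and the interchange of sums gives the stated bound.

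The one flaw is in your resolution of the ``main obstacle,'' although the obstacle you identify is real. You argue that for $l\le L^*$ and small $\delta$ the supports of the per-body feature maps $\mathcal{I}_l^{(k)}$ stay disjoint because ``the filters have bounded spatial extent.'' That premise fails in exactly the regime of \cref{theorem2}: the Lipschitz condition on the filters forces $J_l=\mathcal{O}(1/\delta)$, i.e.\ spatial extent $J_l\delta_{l-1}=\mathcal{O}(1)$, which does not shrink as $\delta\to 0$. Hence already after one layer the per-body supports generically overlap, and disjointness cannot be used to equate $\sum_{y^l}|\mathcal{I}_l|$ with $\sum_k\sum_{y^l}|\mathcal{I}_l^{(k)}|$. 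The underlying issue --- that $M_{\mathcal{I}}$ is defined through the full feature map while the estimate needs a bound on the per-body mass, and cancellation makes the triangle inequality run the wrong way --- is equally present in the paper's proof, which silently uses $M_{\mathcal{I}}$ as the constant obtained when \cref{lem:dis1} is applied to $\mathcal{I}_l^k$. The clean fix is definitional rather than geometric: read (or redefine) $M_{\mathcal{I}}$ as a uniform bound on the per-body masses $\sum_{y^l}|\mathcal{I}_l^k(y^l,z_k)|\,\varepsilon_l$, i.e.\ include a maximum over $k$. These quantities are finite and independent of $\delta$: at layer $0$ the $\mathcal{I}_0^k$ are nonnegative, so each per-body mass is at most the total mass, and each subsequent convolution inflates the mass by at most a filter-dependent constant factor. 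With $M_{\mathcal{I}}$ read this way, your recursion and unrolling yield exactly the claimed inequality, and the lemma's content is unchanged since the constant remains independent of $\delta$ and $z$.
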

\begin{proof}

Let $z = (\vr_1,\cdots,  \vr_K) = (z_1,\cdots, z_K)$ and
\begin{equation*}
    \mathcal{I}_0 (y^{0},z) = \sum_{k=1}^K \mathcal{I}_0^k(y^{l},z_k),
    \quad
    \mathcal{I}_{l}^k(y, z)= \sum_{y^{l-1}\in Y^{l-1}} \mathcal{I}_{l-1}^k(y^{l-1},z_k) \phi_{l}(y^{l-1}, y) \varepsilon_l,\ l=1,\cdots, L,
\end{equation*}
we next prove that $\mathcal{I}_l (y^{l},z) = \sum_{k=1}^K \mathcal{I}_l^k(y^{l},z_k)$ for $l=1,\cdots, L$, by induction on $l$. Suppose now this statement holds for $l$, then we have
\begin{align*}
\mathcal{I}_{l+1} (y^{l+1}, z)=& \sum_{y^{l}\in Y^{l}} \mathcal{I}_l (y^{l},z) \phi_{l+1}(y^{l}, y^{l+1})\varepsilon_l
=\sum_{y^{l}\in Y^{l}} \sum_{k=1}^K \mathcal{I}_l^k(y^{l},z_k) \phi_{l+1}(y^{l}, y^{l+1})\varepsilon_l.
\end{align*}
Swapping the order of summation leads to
\begin{align*}
\mathcal{I}_{l+1} (y^{l+1}, z)=&\sum_{k=1}^K\sum_{y^{l}\in Y^{l}}  \mathcal{I}_l^k(y^{l},z_k) \phi_{l+1}(y^{l}, y^{l+1})\varepsilon_l
= \sum_{k=1}^K\mathcal{I}_{l+1}^k (y^{l+1}, z_k),
\end{align*}
which completes the induction.
Applying \cref{lem:dis1} to $\mathcal{I}_l^d$ we have
\begin{align*}
&a_l(\hat{\Delta}^{l})=\max_{y^{l}\in Y^{l}} | \mathcal{I}_{l}(y^{l}, z+\hat{\Delta}^l) - \mathcal{I}_{l}(y^{l}, z)|
=\max_{y^{l}\in Y^{l}} | \sum_{k=1}^D \mathcal{I}_{l}^k(y^{l}, z_k+\hat{\Delta}^l_k) - \mathcal{I}_{l}^k(y^{l}, z)|\\
\leq & \sum_{k=1}^K \max_{y^{l}\in Y^{l}} | \mathcal{I}_{l}^k(y^{l}, z_k+\hat{\Delta}^l_k) - \mathcal{I}_{l}^k(y^{l}, z)|
\leq \sum_{k=1}^K\sum_{i=0}^{l-1} M_{\phi}^i M_{\mathcal{I}} b_{l-i}(\hat{\Delta}^{l-i}_k - \hat{\Delta}^{l-i-1}_k),
\end{align*}
which concludes the proof.
\end{proof}

By appropriately selecting the sequence $\{\hat{\Delta}^{l}\}_{l=1}^L$, we immediately have the following corollary.
\begin{corollary}\label{corollary}
For motion model \cref{eq:ode2} with $K\geq 1$, suppose $\Delta=(\Delta_1,\cdots, \Delta_K)$ satisfies
\begin{equation}\label{eq:integer of delta}
\Delta_k =\left( \Delta_{k,1}, \Delta_{k,2} \right) = \left( I_{k,1} \delta,\ I_{k,2}\delta \right)\in \R^2, \text{ where } I_{k,1}, I_{k,2} \text{ are integers}.
\end{equation}
Additionally, let $c_{\phi}$ be constants satisfying $b_l\left(\big(i_1\prod_{i=1}^{l-1} s_{i}\big)\delta,\ \big(i_2\prod_{i=1}^{l-1} s_{i}\big)\delta \right) \leq c_{\phi}(|i_1| \delta + |i_2|\delta) $, and let $l^*_{k,j}$ is the largest
integer satisfying $l^*_{k,j} \leq L$ and $2^{l^*_{k,j}-1}\leq |I_{k,j}|$ for $j=1,2$ and $k=1, \cdots, K$.
If we take $s_l=2$ for $l=1, \cdots,L-1$, then we have
\begin{equation*}
a_L(\Delta) \leq \sum_{i=0}^{L-1} M_{\phi}^i M_{\mathcal{I}} \sum_{k=1}^K \left(\frac{c_{\phi}}{2^{l^*_{k,1}-1} }| \Delta_{k,1}|+ \frac{c_{\phi}}{2^{l^*_{k,2}-1} }| \Delta_{k,2}|\right).
\end{equation*}
\end{corollary}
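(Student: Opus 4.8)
The plan is to deduce the corollary directly from \cref{lemma1} by exhibiting one admissible telescoping sequence $\{\hat{\Delta}^l\}_{l=0}^{L}$ and then estimating the resulting weighted sum of filter increments. Since the hypothesis bounds $b_m$ additively in the two coordinates and the target bound is itself a sum over bodies $k$ and coordinates $j$, I would first reduce to constructing $\hat{\Delta}^l_{k,j}$ one scalar coordinate at a time and assembling the vector sequence componentwise. So fix a coordinate and write $\Delta_{k,j} = I\delta$ with $I \in \mathbb{Z}$.

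The admissibility condition $Y^{l-1} - (\hat{\Delta}^l - \hat{\Delta}^{l-1}) = Y^{l-1}$ carried over from \cref{cor:deri}, together with the grid spacing $2^{l-1}\delta$ of $Y^{l-1}$ forced by $s_i = 2$ for $i \leq L-1$, means each increment $d^m := \hat{\Delta}^m - \hat{\Delta}^{m-1}$ must be an exact integer multiple $i_m 2^{m-1}\delta$ of that spacing (a non-grid shift would displace the support of $\mathcal{I}_{m-1}$ off the zero region and violate \cref{ass:eva2}), while still telescoping to $\sum_{m=1}^{L} d^m = I\delta$. I would realize this by the non-negative binary expansion of $|I|$: for $m \leq l^*$ set $i_m = \sign(I)\,\epsilon_{m-1}$ with $\epsilon_{m-1}$ the $(m-1)$-th binary digit, and place any overflow beyond $2^{l^*}$ into the top level $m=L$ in the case $l^*=L$. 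By the very definition of $l^*$ no level above $l^*$ is used, and all partial sums stay within $[0,|\Delta_{k,j}|]$, so every increment remains small in $\norm{\cdot}_\infty$ and \cref{ass:eva2} indeed applies at each step.

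Substituting this sequence into \cref{lemma1} and re-indexing its estimate by $m = L-i$ turns it into $a_L(\Delta) \leq M_{\mathcal{I}} \sum_{k} \sum_{m=1}^{L} M_\phi^{L-m}\, b_m(d^m_k)$, and the corollary's hypothesis on $c_\phi$ gives $b_m(d^m_k) \leq c_\phi(|i_{m,k,1}| + |i_{m,k,2}|)\delta$. It then remains to verify, per coordinate, the scalar inequality $\sum_{m=1}^{L} M_\phi^{L-m}|i_m| \leq \big(\sum_{i=0}^{L-1} M_\phi^i\big)\,|I|/2^{l^*-1}$. When $l^* < L$ all digits satisfy $|i_m|\leq 1$ and vanish for $m>l^*$, so the left side is at most $\sum_{m=1}^{l^*} M_\phi^{L-m} \leq \sum_{i=0}^{L-1} M_\phi^i$, while $|I|/2^{l^*-1} \geq 1$ by $2^{l^*-1} \leq |I|$; when $l^*=L$ the top-level overflow factor $q = \lfloor |I|/2^{L-1}\rfloor$ appears but is absorbed since $|I|/2^{L-1} \geq q$. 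Finally substituting $|I|\delta = |\Delta_{k,j}|$ and summing over $k$ and $j$ reproduces the stated bound exactly.

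The main obstacle is the construction and verification of the admissible sequence, not the closing arithmetic: one must confirm that $Y^{l-1} - d^l = Y^{l-1}$ holds for every increment, i.e. that each $d^l$ is simultaneously an exact multiple of $2^{l-1}\delta$ and small enough in $\norm{\cdot}_\infty$ for \cref{ass:eva2}. This is precisely why confining all nonzero digits to levels $m \leq l^*$ (so the coarse scale $2^{l^*-1}\delta \leq |\Delta_{k,j}|$ carries the bulk of the displacement) and retaining the full geometric sum $\sum_{i=0}^{L-1} M_\phi^i$ on the right is essential; with those choices the estimate closes for any sign of $\log M_\phi$.
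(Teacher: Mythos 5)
Your proposal is correct and takes essentially the same route as the paper: the paper's recursive construction $i_{k,j}^{l-1} = i_{k,j}^{l} \bmod 2^{l-1}$ is exactly your binary-digit decomposition (with the overflow $\lfloor |I_{k,j}|/2^{L-1}\rfloor$ placed at the top level), and both arguments feed this admissible dyadic telescoping sequence into \cref{lemma1} and bound each increment by $c_{\phi}|\Delta_{k,j}|/2^{l^*_{k,j}-1}$ via $2^{l^*_{k,j}-1}\leq |I_{k,j}|$. The only cosmetic difference is that the paper bounds each $b_{l-i}$ term uniformly and then sums the geometric series, whereas you verify the weighted-sum inequality directly with a case split on $l^*<L$ versus $l^*=L$.
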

\begin{proof}
Let $\hat{\Delta}^{l}_k=\left( i_{k,1}^l \delta,\ i_{k,2}^l\delta \right)\in \R^2$, where $i_{k,1}^l, i_{k,2}^l$ are integers defined recursively by
\begin{equation*}
\begin{aligned}
i_{k,1}^L = I_{k,1},\quad
i_{k,1}^{l-1} = i_{k,1}^{l} \bmod  2^{l-1},\quad l=L, \cdots, 2,\quad i_{k,1}^{0}=0;\\
i_{k,2}^L = I_{k,2},\quad
i_{k,2}^{l-1} = i_{k,2}^{l} \bmod  2^{l-1},\quad l=L, \cdots, 2,\quad i_{k,2}^{0}=0.\\
\end{aligned}
\end{equation*}
We can readily check that the sequence $\{\hat{\Delta}^{l}=(\hat{\Delta}^l_1,\cdots, \hat{\Delta}^l_K)\}_{l=1}^L$  satisfies $Y^{l}-(\hat{\Delta}^{l+1}_k- \hat{\Delta}^{l}_k)=Y^{l}$ due to Assumption \ref{ass:eva2} and $\hat{\Delta}^{0}=0$. In addition, we have
\begin{equation*}
\begin{aligned}
b_{l-i}(\hat{\Delta}^{l-i}_k - \hat{\Delta}^{l-i-1}_k)
=& b_{l-i}\left( (i_{k,1}^{l-i}-i_{k,1}^{l-i-1}) \delta,\ (i_{k,2}^{l-i}-i_{k,2}^{l-i-1}) \delta \right)\\
=& b_{l-i}\left(\Big(\lfloor i_{k,1}^{l-i}/ 2^{l-i-1} \rfloor\cdot 2^{l-i-1} \Big)\delta,\ \Big(\lfloor i_{k,2}^{l-i}/ 2^{l-i-1} \rfloor\cdot 2^{l-i-1}\Big)\delta \right)\\
\leq & c_{\phi}\left| \lfloor i_{k,1}^{l-i}/ 2^{l-i-1}\rfloor \right|\delta + c_{\phi}\left|\lfloor i_{k,2}^{l-i}/ 2^{l-i-1}\rfloor\right| \delta \\
\leq& \frac{c_{\phi}}{2^{l^*_{k,1}-1} }| i_{k,1}^{L}|\delta + \frac{c_{\phi}}{2^{l^*_{k,2}-1} }| i_{k,2}^{L}|\delta,
\end{aligned}
\end{equation*}
where $l^*_{k,j}$ is the largest
integer satisfying $l^*_{k,j} \leq L$, $2^{l^*_{k,j}-1}\leq |i_{k,j}^{L}|$ for $j=1,2$ and $k=1, \cdots, K$. Finally, applying \cref{lemma1}, we have
\begin{equation*}
a_L(\Delta) \leq \sum_{i=0}^{L-1} M_{\phi}^i M_{\mathcal{I}} \sum_{k=1}^K \left(\frac{c_{\phi}}{2^{l^*_{k,1}-1} }| \Delta_{k,1}|+ \frac{c_{\phi}}{2^{l^*_{k,2}-1} }| \Delta_{k,2}|\right).
\end{equation*}
\end{proof}

\subsubsection{Proof of \cref{theorem2}}
With these results, we are able to provided the proof of \cref{theorem2}.
\begin{proof}[Proof of \cref{theorem2}]
Denote $\Delta = z_1-z_2$, $\Delta^r=z_1^r-z_2^r$, $\Delta^t =z_1^t-z_2^t$ and for any $\delta$, denote
\begin{equation*}
\hat{\Delta}^t=\arg\min_{\hat{\Delta} \text{ satisfies (\ref{eq:integer of delta})}} \norm{\Delta^t -\hat{\Delta}}, \quad \tilde{\Delta}^t= \Delta^t - \hat{\Delta}^t.
\end{equation*}
Then we have $\Delta=(\bm{0}, \Delta^r) + (\tilde{\Delta}^t, \bm{0}) + (\hat{\Delta}^t, \bm{0})$ and $\|\tilde{\Delta}^t\|\leq \delta$. Subsequently, we have
\begin{align*}
&\norm{\mathcal{E}_{\delta}\circ \mI_{\delta} \circ \mathcal{S}( z_1) - \mathcal{E}_{\delta} \circ \mI_{\delta} \circ \mathcal{S}(z_2)}\\
\leq& \norm{\mathcal{E}_{\delta}\circ \mI_{\delta} \circ \mathcal{S}( z_2+(\bm{0}, \Delta^r)) - \mathcal{E}_{\delta} \circ \mI_{\delta} \circ \mathcal{S}(z_2)}
\\
&+ \norm{\mathcal{E}_{\delta}\circ \mI_{\delta} \circ \mathcal{S}( z_2+(\bm{0}, \Delta^r) + (\bm{0}, \tilde{\Delta}^t)) - \mathcal{E}_{\delta} \circ \mI_{\delta} \circ \mathcal{S}(z_2+(\bm{0}, \Delta^r))}\\
&+ \norm{\mathcal{E}_{\delta}\circ \mI_{\delta} \circ \mathcal{S}( z_2+(\tilde{\Delta}^t, \Delta^r) + (\bm{0}, \hat{\Delta}^t)) - \mathcal{E}_{\delta} \circ \mI_{\delta} \circ \mathcal{S}(z_2+(\tilde{\Delta}^t, \Delta^r))}.
\end{align*}
By applying the conclusion for case of $L^*=1$ to the first two components and Corollary \ref{corollary} to the third components, we obtain
\begin{align*}
&\norm{\mathcal{E}_{\delta}\circ \mI_{\delta} \circ \mathcal{S}( z_1) - \mathcal{E}_{\delta} \circ \mI_{\delta} \circ \mathcal{S}(z_2)}\\
\leq& C c_{\phi}\norm{z_1^r-z_2^r} +C c_{\phi}\delta + \sum_{i=0}^{L-1} M_{\phi}^i M_{\mathcal{I}} \sum_{k=1}^K \left(\frac{c_{\phi}}{2^{l^*_{k,1}-1} }| \Delta_{k,1}|+ \frac{c_{\phi}}{2^{l^*_{k,2}-1} }| \Delta_{k,2}|\right).
\end{align*}
Let $\delta \rightarrow 0$, we conclude the proof.
\end{proof}

% \subsection{Continuous generalization of multi-channel convolution}\label{app:multi-channel}
% This section presents the formulation of multi-channel convolutions, which operate on multiple input channels and return multiple output channels.
% It can be derived from single-channel version \cref{eq:cnn} by generalizing the scalar-valued functions and scalar multiplication to vector-valued or matrix-valued counterparts.
% \begin{equation*}
% \begin{aligned}
% &\text{Input: } &&\mathcal{I}_0 (y^{0},z) = \mathcal{I}(y^{0},z),\\
% &\text{Hidden layers: }
% &&\mathcal{I}_{l+1}(y, z| Y^l, \cdots, Y^0)= \sum_{y^{l}\in Y^{l}} \phi_{l+1}(y^{l}, y) \mathcal{I}_l(y^{l},z| Y^{l-1}, \cdots, Y^0) ,\ l\geq 0,\\
% &\text{Output: }&&\mathcal{I}_{L}(y^{L}, z| Y^{L-1}, \cdots, Y^0),\ y^{L} \in Y^{L}.
% \end{aligned}
% \end{equation*}
% Here $\mathcal{I}_l(y^{l},z| Y^{l-1}, \cdots, Y^0) \in \R^{c_l}$, $\phi_{l+1}(y^{l}, y)\in \R^{c_{l+1}\times c_l}$.
% The parameter $c_l$ denotes the number of channels in the input of the $l$-th layer. It is noted that extending to multi-channels does not affect the theoretical results due to the consistency property of matrix norm.

\subsection{Additional experimental details }

\subsubsection{Model details}\label{app:model details}
\textbf{Details of training}.
% For the two-body system, all autoencoders are trained using the default setup with full-batch Adam optimization over $5\times 10^4$ epochs and a learning rate of $0.001$. All available data are used for training, and the trained models are subsequently employed to predict future dynamics.
% For the other datasets, t
The models are trained with Adam optimization for $500$ epochs, using a learning rate of $0.001$ and a batch size of $512$. The data are divided into training, validation, and testing sets, with $80\%$ used for training, $10\%$ for validation, and $10\%$ for testing.

\textbf{Dimensions of latent states}.
For the two-body and damped pendulum systems, the latent state dimension was set to 2. For the other datasets, a latent state dimension of 8 was used. These values are intentionally larger than the theoretical minimum to enhance the model's expressiveness.

\textbf{CNN autoencoder}.
The architecture of CNN encoder in this paper is adapted from the setting provided by \cite{chen2022automated}.
The encoder is a 16-layer CNN, with the parameters of the downsampling convolutional layers detailed in Tables \ref{table:parameter1} and \ref{table:parameter2}.
Each downsampling convolutional layer is followed by an additional convolutional layer with  same number of output channels, but a $3\times3$ filter, a stride of $1$, and zero padding of $1$ to enhance expressiveness. All convolutional layers are accompanied by a batch normalization layer and a ReLU activation function. The final latent variable is obtained through a linear layer.

\begin{table}[htbp]
\vskip -0.5cm
\caption{Architecture of the encoder in CpAE}
\begin{center}
\begin{tabular}{ l| c c c c c c c c}
\hline
{Layer} &  1& 2&  3&  4& 5&6 &7 &8
\\ \hline
Filter size     & 12  &12   &12   &4   &4   &4   &4   &(3,4)\\
Stride          & 2   &2    &2    &2   &2   &2   &2   &(1,2)\\
\#Filters& 16& 32& 64& 64& 64& 64& 64& 64\\
Padding & 5& 5& 5& 1& 1& 1& 1& 1\\
\hline
\end{tabular}
\end{center}
\label{table:parameter1}
\end{table}

\begin{table}[htbp]
\vskip -0.5cm
\caption{Architecture of the encoder in standard AE}
\begin{center}
\begin{tabular}{ l| c c c c c c c c}
\hline
{Layer} &  1& 2&  3&  4& 5&6 &7 &8
\\ \hline
Filter size     & 4  &4   &4   &4   &4   &4   &4   &(3,4)\\
Stride          & 2   &2    &2    &2   &2   &2   &2   &(1,2)\\
\#Filters& 16& 32& 64& 64& 64& 64& 64& 64\\
Padding & 1& 1& 1& 1& 1& 1& 1& 1\\
\hline
\end{tabular}
\end{center}
\label{table:parameter2}
\vskip -0.5cm
\end{table}
All autoencoders share the same decoder architecture, which leverages residual connections and multi-scale predictions to to improve reconstruction performance. In addition to the deconvolutional layers specified in Table \ref{table:parameterd}, each layer, except for the final one, is accompanied by an upsampling operation. This upsampling operation is implemented using a transposed convolutional layer with a filter size of $4 \times 4$, a stride of $2$, and a Sigmoid activation function. The outputs of the deconvolutional layer and the upsampling layer are then concatenated along the channel dimension to form the input for the subsequent layer.

\begin{table}[htbp]
\vskip -0.5cm
\caption{Architecture of the decoder}
\begin{center}
\begin{tabular}{ l| c c c c c c c c}
\hline
{Layer} &8& 7&  6&  5& 4&3 &2&1
\\ \hline
Filter size     & (3,4)  &4   &4   &4   &4   &4   &4   &4\\
Stride          & (1,2)   &2    &2    &2   &2   &2   &2   &2\\
\#Filters& 64& 64& 64& 64& 64& 32& 16& 3\\
Padding & 1& 1& 1& 1& 1& 1& 1& 1\\
Activation &ReLU&ReLU&ReLU&ReLU&ReLU&ReLU&ReLU&Sigmoid\\
\hline
\end{tabular}
\end{center}
\label{table:parameterd}
\vskip -0.2cm
\end{table}

\textbf{Filters size.}
Larger filters are essential to ensure continuity. For a filter in the \(l\)-th layer of size \(J_l \times J_l\), let \(m_l = \max_{j_1, j_2} |\mathcal{W}_l(j_1\delta, j_2\delta)|\) represent the largest weight value. The distance from the location of this maximum value to the filter boundary must be less than \(\lceil J_l/2 \rceil \delta\). This constraint leads to \(\max_l m_l / (\lceil J_l/2 \rceil \delta) \leq c_{\mathcal{W}}\). Namely,
\(
\max_l \max_{j_1, j_2} \frac{|\mathcal{W}_l(j_1\delta, j_2\delta)|}{\lceil J_l/2 \rceil \delta} \leq c_{\mathcal{W}}.
\)
In \cref{theorem}, we show that the constant for the translation component is given by \(\frac{c_{\mathcal{W}}}{2^{L^*-1}}\). Assuming \(\frac{c_{\mathcal{W}}}{2^{L^*-1}}, \max_l \max_{j_1, j_2} |\mathcal{W}_l(j_1\delta, j_2\delta)| = \mathcal{O}(1)\), it follows that we only need to ensure \(2^{L^*} J_l = \mathcal{O}(1/\delta)\). Based on this, we set \(L^* = 4\) in our experiments. The images we use are of size \(3 \times 128 \times 256\). This implies \(2^4 J_l = 128\) or \(2^4 J_l = 256\), which translates to \(J_l = 8\) or \(J_l = 16\). Taking the average of these values, we set \(J_l = 12\).

\textbf{VPNet}.
We employ a very small VPNet for regularization. It comprises three linear layers, each consisting of two sublayers. The activation function employed is Sigmoid.

\textbf{Neural ODE}.
Neural Ordinary Differential Equations (Neural ODE) \cite{chen2018neural} are continuous models by embedding neural networks into continuous dynamical systems. In this paper, we consider autonomous systems of first-order ordinary differential equations
\begin{equation*}
\frac{d}{dt}y(t) = f_{\theta}(y(t)),\quad y(0)=x,
\end{equation*}
The governing function $ f_{\theta}$ in the Neural ODE is parameterized by a FNN with two hidden layers, each containing 128 units and using a tanh activation function. Time discretization is performed using the explicit midpoint scheme.

\textbf{SympNet}.
We utilize LA-SympNets with the default training configuration provided by \citep{jin2023learning}.
This setup includes three layers, each containing two sublayers, and employs the Sigmoid activation function.

\textbf{HNN}.
The Hamiltonian in the HNN is parameterized by a FNN with two hidden layers, each containing 128 units and using a tanh activation function. Time discretization is performed using the explicit midpoint scheme.

\subsubsection{Details of datasets} \label{app:Details of datasets}

\textbf{Damped single pendulum}.
We consider a pendulum of length \( L \) and mass \( m \), attached to a fixed point by a rigid rod, free to swing under the influence of gravity. The moment of inertia of the pendulum is given by \( I = \frac{1}{3} mL^2 \). Let \( \theta \) represent the angular position and \( \dot{\theta} \) the angular velocity. The system’s kinetic energy \( T \) and potential energy \( V \) are as follows:
\begin{equation*}
T = \frac{1}{2} I \dot{\theta}^2 = \frac{1}{6} mL^2 \dot{\theta}^2, \quad V = -\frac{1}{2} mgL \cos(\theta).
\end{equation*}
These energy yield the following second-order differential equation governing the system's dynamics:
\begin{equation*}
\ddot{\theta} = -\frac{3g}{2L} \sin(\theta).
\end{equation*}
Here we account for a damping force proportional to the angular velocity, with a damping coefficient \( k \), the final equation of motion for the damped pendulum becomes:
\[
\ddot{\theta} = -\frac{3g}{2L} \sin(\theta) - k \dot{\theta}.
\]
To simulate the physics equation of the single pendulum system, we set the pendulum mass $m = 1$, the pendulum length as $L = 0.125$ and the frictional constant $k = 0.8$. For each trajectory, we randomly sample the initial angular position $\theta$ and angular velocity $\dot{\theta}$.

\subsubsection{Additional results}\label{app:exp}
Here we show the continuous models allows for both predicting past states and interpolating between known points (\cref{fig:appendix}), and we visualize the evolution of the learned latent states (\cref{fig:evo}).

\begin{figure}[!tbp]
\centerline{\includegraphics[width=0.95\linewidth]{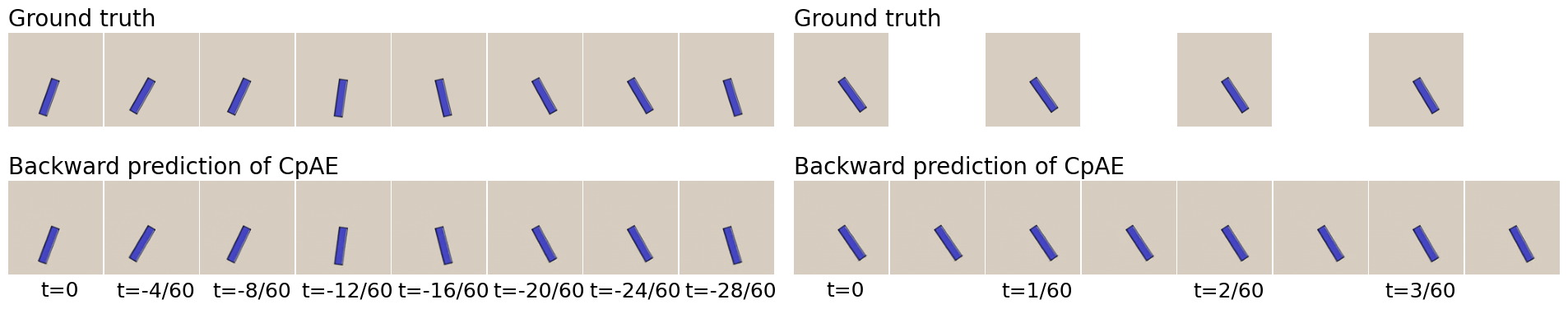}}
\caption{Backward predictions (left) and interpolations (right). Continuous models can accomplish these tasks by changing the value or the sign
of $dt$ used in the integrator. This flexibility allows for both predicting past states and interpolating between known points.
}
\label{fig:appendix}
\end{figure}

\begin{figure}[!tbp]
\begin{minipage}{0.49\linewidth}%可修改0.49为其他比例，调整大小
\centerline{\includegraphics[width=\textwidth]{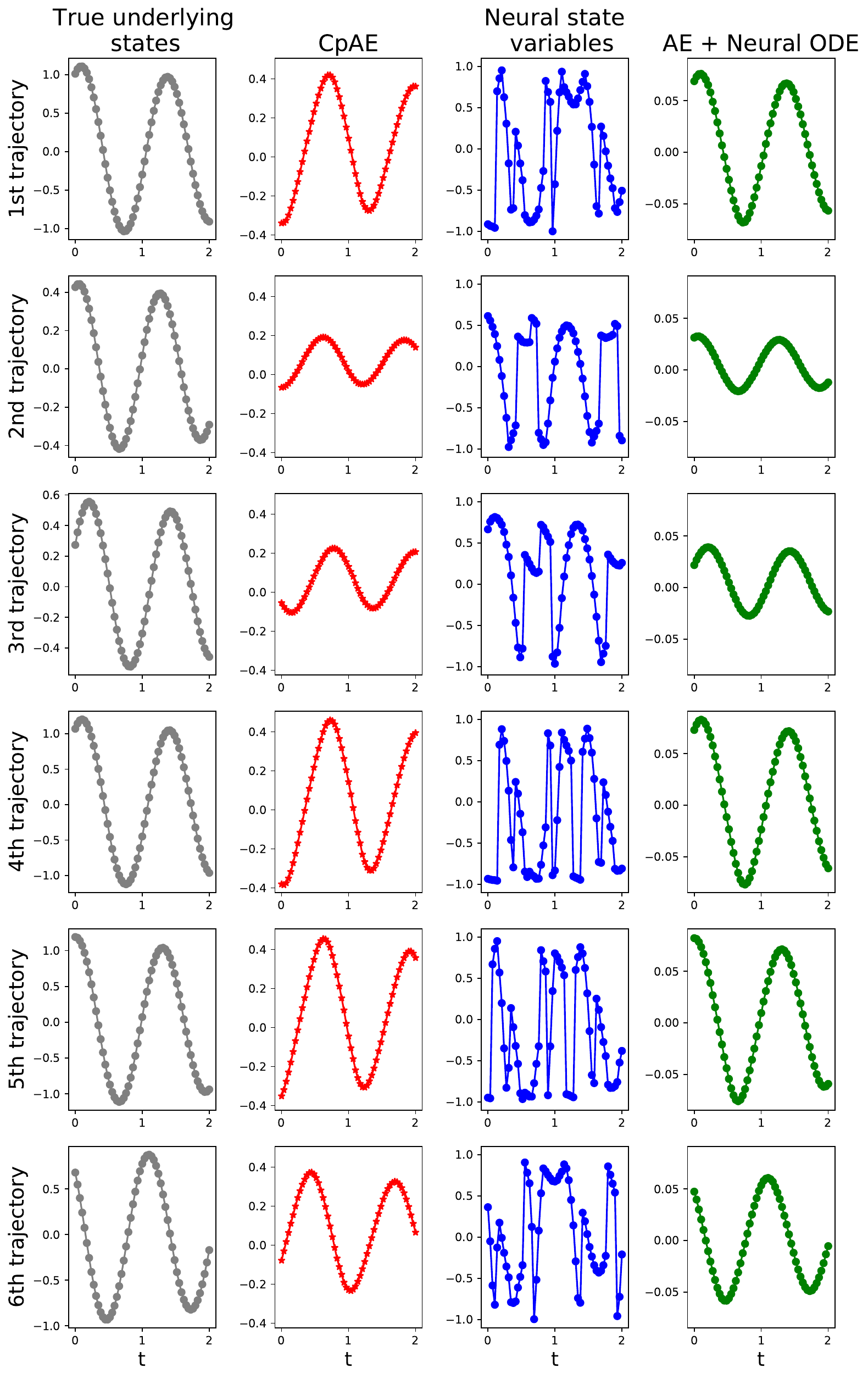}}
\centerline{\small Pendulum}
\end{minipage}
\begin{minipage}{0.49\linewidth}
\centerline{\includegraphics[width=\textwidth]{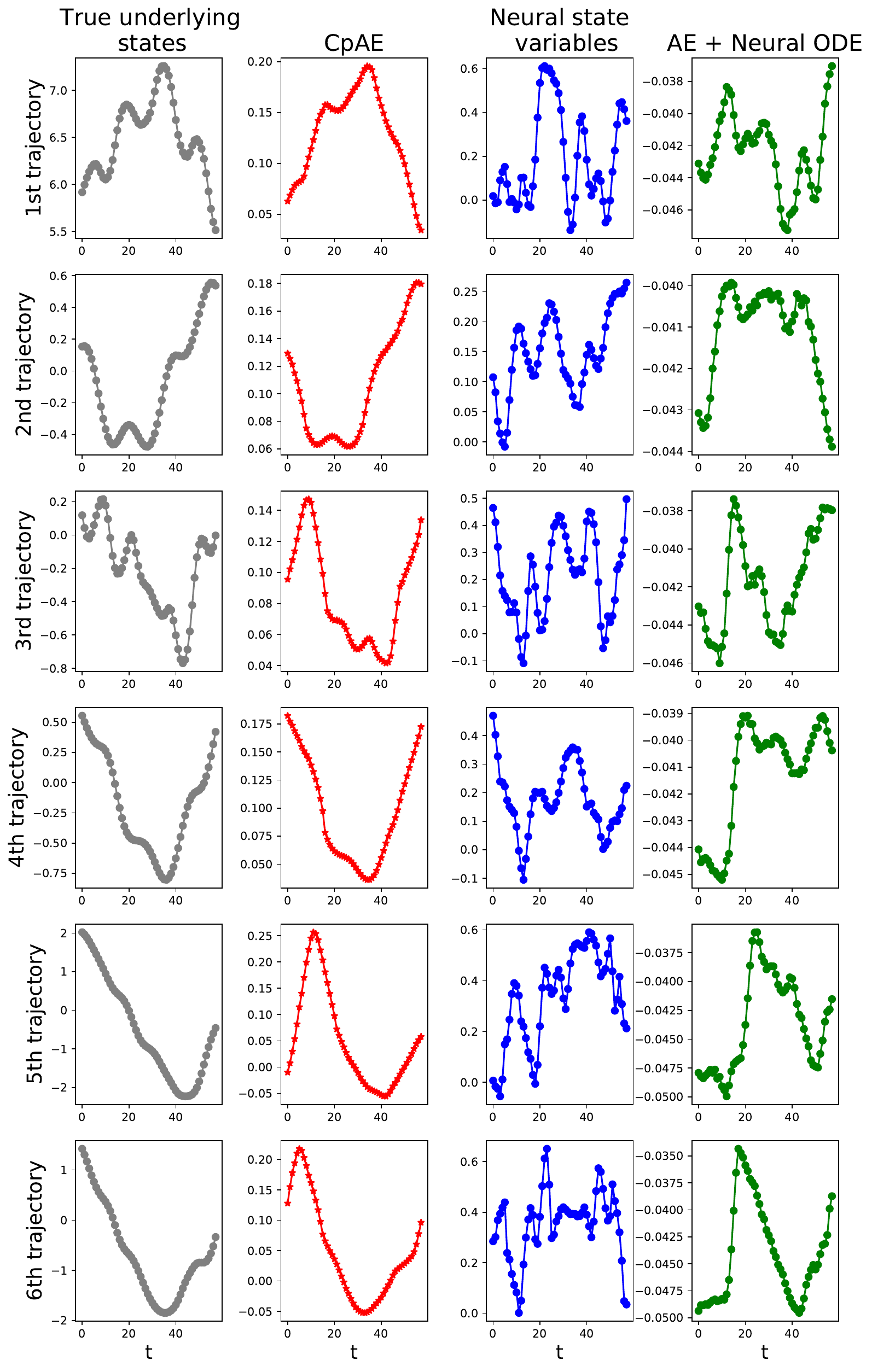}}
    \centerline{\small Elastic double pendulum}
\end{minipage}
\caption{Evolution of the learned latent states for various autoencoders. CpAEs are capable of generating latent states that evolve continuously with time.
}
\label{fig:evo}
\end{figure}

\subsubsection{Reconstruct image from exact latent states: FNN autoencoder}\label{app:fnn}

We test the performance of FNN autoencoders using Two-body problem data. This dataset is generated in the same manner as in \cite{jin2023learning} and is used to illustrate the FNN autoencoder.
It consists of $100$ observations-images of size $100 \times 50$-captured at a time interval of $0.6$ seconds along a single trajectory.

As shown in \cref{fig:fnn}, all FNN-based methods, except HNN, accurately predict the ground truth within the given time interval for the two-body system data. Given the strong performance baseline set by these established methods, any further improvements are likely to be incremental and may not significantly surpass the current results.

However, this success does not extend to datasets with more complex visual patterns. \cref{fig:recon} illustrates that FNN autoencoders fail to achieve complete reconstruction for the damped pendulum and elastic double pendulum datasets.
Here, reconstruction indicates that the encoder extracts the current latent state from the input image, and the decoder maps it back to reconstruct the same image. In this process, the decoder uses the exact latent state as input, with no dynamical information involved or reflected in these reconstruction results.
As discussed in \cref{sec:2}, the third goal of learning dynamics from images is to identify a decoder capable of reconstructing pixel-level observations using the exact latent state as input. Therefore, we focus on CNNs in this paper.
\begin{figure}[!tbp]
\centerline{\includegraphics[width=0.6\linewidth]{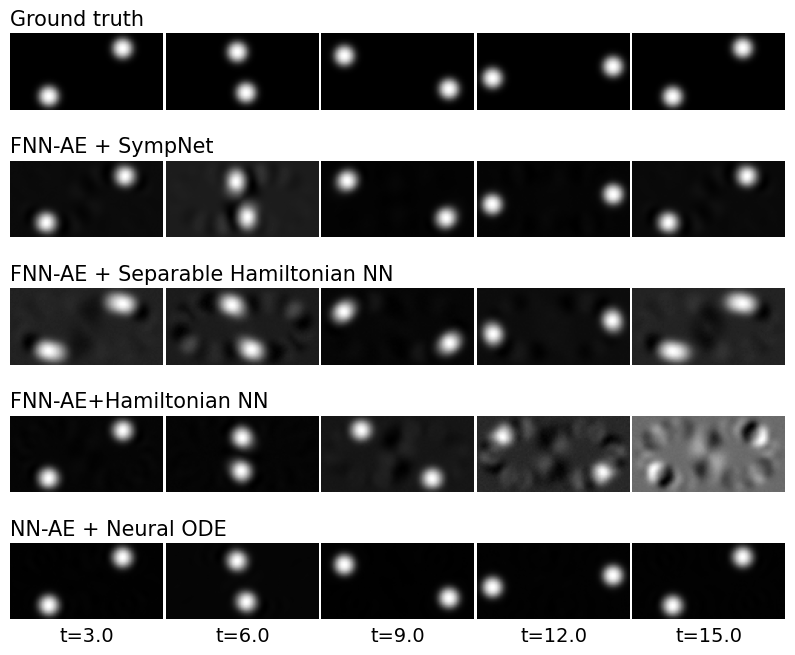}}
\vskip -0.3cm
\caption{Predictions for two-body dataset.
}
\vskip -0.3cm
\label{fig:fnn}
\end{figure}
% In this section, we numerically demonstrate that fully connected neural networks (FNNs) are often insufficient for achieving the this objective of reconstructing images with complex visual patterns, as evidenced in \cref{fig:recon}.
\begin{figure}[!tbp]
\centerline{\includegraphics[width=1\linewidth]{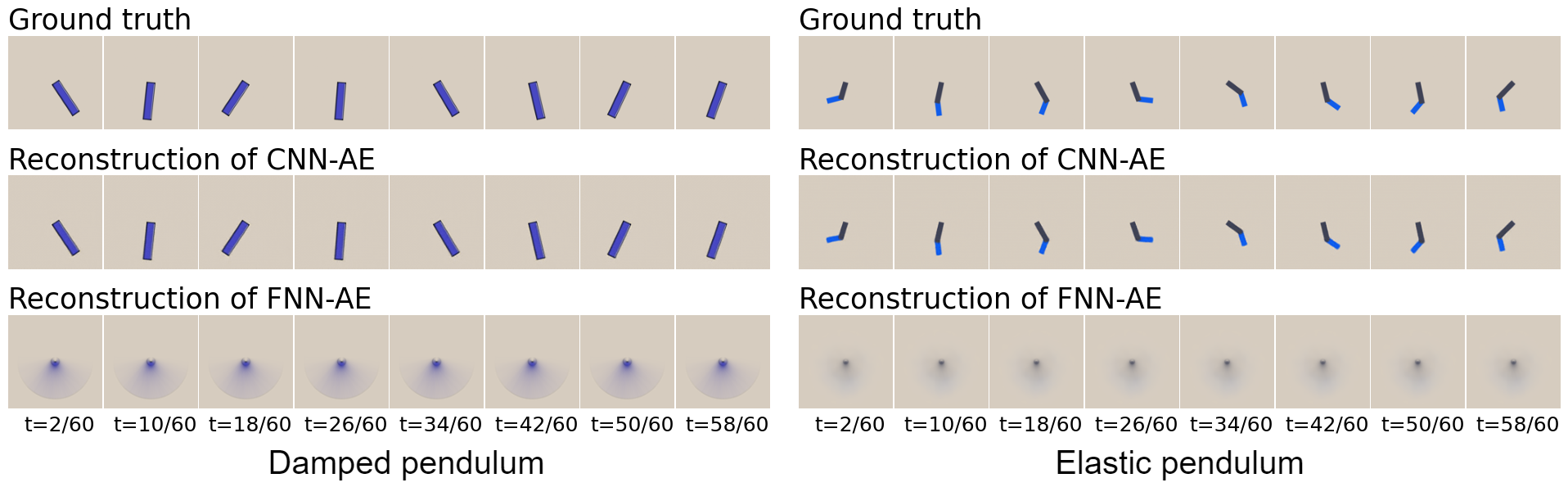}}
\vskip -0.3cm
\caption{Reconstructions for simulation data.
}
\vskip -0.3cm
\label{fig:recon}
\end{figure}

\subsection{Ablation studies}
\subsubsection{Details for continuity of latent states}\label{app:continuity details}

The autoencoder model consists of a single-layer CNN with a \(48 \times 48\) convolutional kernel and 32 output channels, followed by a linear mapping to generate the latent states. This simple task and model selection enable us to clearly and directly demonstrate the limitations of conventional CNNs in achieving $\delta$-continuity. Herein, all autoencoders, regardless of the applied regularizers, are trained using full-batch Adam optimization for \(10^5\) epochs with a learning rate of 0.001. Once trained, a Neural ODE is employed to model the dynamics of the learned latent states. The governing function $ f_{\theta}$ in the Neural ODE is parameterized by a FNN with single hidden layers containing 64 units and using a tanh activation function.

As a supplement to \cref{fig:continuity}, the image prediction results are presented in \cref{fig:cir}. Since neither the standard autoencoder nor the addition of a conventional $L_2$ regularizer can predict the evolution of latent states, they fail to generate the predicted images. In contrast, the Neural ODE effectively captures the dynamics of the latent states generated by the proposed CpAE, enabling accurate predictions.

\begin{figure}[!tbp]
\centerline{\includegraphics[width=1\linewidth]{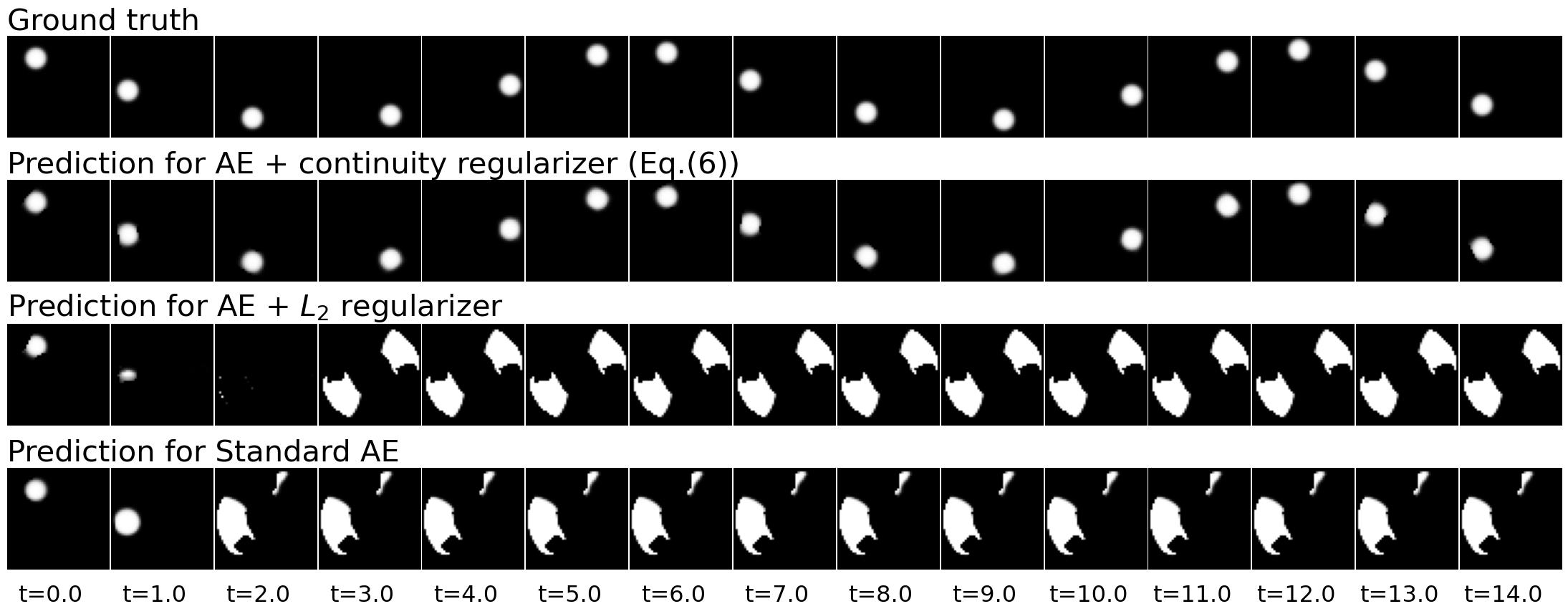}}
\vskip -0.3cm
\caption{Predictions for circular motion data.
}
\vskip -0.2cm
\label{fig:cir}
\end{figure}

\subsubsection{Effects of weights}
Here we compare the performance when vary the weights of regularization terms $\lambda_J$ and $\lambda_R$.

The weight $\lambda_J$ controls the trade-off between model complexity and the continuity of the filter. As shown on the left side of \cref{fig:vpt}, setting $\lambda_J$ too high penalizes overly complex models, which can lead to underfitting and suboptimal performance. Conversely, a lower $\lambda_J$ reduces the influence of regularization, increasing the risk of discontinuity in the learned latent states.
The weight $\lambda_R$ emphasizes orientation preservation (i.e., a positive determinant of the Jacobian) in the learned latent states. As shown on the right side of \cref{fig:vpt}, setting $\lambda_R$ too high shifts the focus toward volume preservation (i.e., a unit determinant of the Jacobian), which can cause underfitting and degrade model performance.

\begin{figure}[!tbp]
\centerline{\includegraphics[width=1\linewidth]{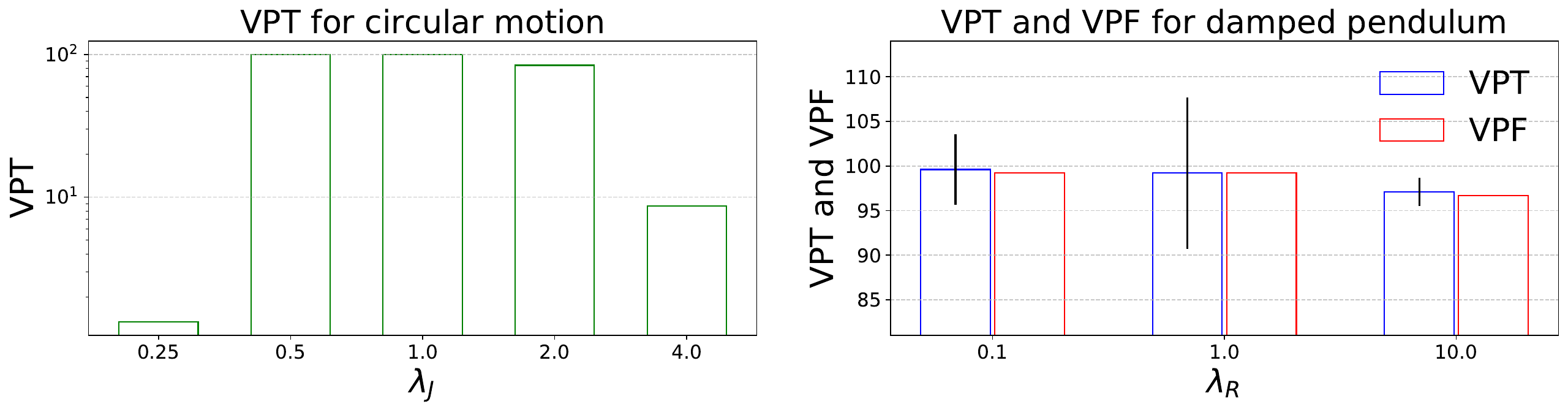}}
\vskip -0.3cm
\caption{Results evaluated using the VPT and VPF metrics of varying weights. All values are scaled by a factor of 100, with higher scores indicating better performance. Note that the VPF metric is not reported for the circular motion dataset, as it consists of only a single trajectory.
}
\vskip -0.3cm
\label{fig:vpt}
\end{figure}

\end{document}